\newcommand{\given}{{\,|\,}}
\newcommand{\xhdr}[1]{\vspace{1mm} \noindent{{\bf #1.}}}
\title{Reinforcement Learning Under Algorithmic Triage}
\date{}
\author{{Eleni Straitouri$^{1}$,}
	{Adish Singla$^{1}$,}
	{Vahid Balazadeh Meresht$^{2}$,}	  
	{Manuel Gomez Rodriguez$^{1}$}}
\affil{$^{1}$Max Planck Institute for Software Systems, \{estraitouri, adishs, manuelgr\}@mpi-sws.org
	$^2$University of Toronto, balazadehvahid@gmail.com }
\begin{document}

	\maketitle
	
	\begin{abstract}
		Methods to learn under algorithmic triage have predominantly focused on supervised learning settings 
where each decision, or prediction, is independent of each other.
Under algorithmic triage, a supervised learning model predicts a fraction of the instances and humans
predict the remaining ones.
In this work, we take a first step towards developing reinforcement learning models that are 
optimized to operate under algorithmic triage.
To this end, we look at the problem through the framework of options and develop a two-stage 
actor-critic method to learn reinforcement learning models under triage. 
The first stage performs offline, off-policy training using human data gathered in an environment 
where the human has operated on their own. 
The second stage performs on-policy training to account for the impact that switching may have on 
the human policy, which may be difficult to anticipate from the above human data.
Extensive simulation experiments in a synthetic car driving task show that the machine models and the triage policies trained using our two-stage method effectively complement human policies and outperform those provided by several competitive baselines.


%

	\end{abstract}
	
	\section{Introduction}
	\label{sec:introduction}
Learning under algorithmic triage is a new learning paradigm which seeks the development of ma\-chine learning models that operate under different automation levels---models that take decisions for a given fraction of instances and leave the remaining ones to humans~\citep{okati2021differentiable, raghu2019algorithmic}.
This new paradigm has also been referred to as learning under human assistance~\citep{de2020regression, de2020classification}, learning to complement humans~\citep{wilder2020learning, bansal2020optimizing}, and learning to defer to an expert~\citep{sontag2020}.
In learning under algorithmic triage, one does not only has to find a machine learning model but also a triage policy which determines who decides, the model or the human, and when.

Existing works have shown early success at fulfilling the promise of algorithmic triage---by working together, 
they have shown that humans and machine learning models achieve a considerably better performance than each of 
them would achieve on their own.
However, they have predominantly focused on supervised learning settings where each decision, or prediction, is independent of each other. A very recent notable exception is the work by~\citet{meresht2021learning}, which learns to switch con\-trol between machine and human agents in a reinforcement learning setting where decisions are dependent. 
However, in contrast to our work, the policies of the machine agents are (pre-)trained to operate under full automation.
While their problem setting is different, a natural extension of their algorithm to our setting achieves lower performance than ours.


In this paper, our goal is to develop reinforcement learning models that are op\-ti\-mized to operate under algorithmic triage.
Similarly as in supervised learning, one of the main challenges is that, for each potential triage policy, there is an optimal machine agent, however, the triage policy is also something one seeks to optimize.
Moreover, in comparison with supervised learning, we face two additional challenges.
First, due to safety concerns, the vast majority of reinforcement learning models are trained using simulator environments~\cite{dosovitskiy2017carla, talpaert2019exploring, wymann2000torcs}. 
Unfortunately, in these environments, human data is typically very limited and, as a result, an accurate estimation of human policies is challenging~\cite{dafoe2021cooperative,dafoe2020open, mcilroy2020aligning, scott2006crawdad, kurin2017atari}.
%
%
Second, the presence of switching introduces an additional cognitive load on the sequential 
decision making process~\cite{brookhuis2001behavioural}. As a result, human policies may worsen 
in ways that are difficult to anticipate from historical human data.

\xhdr{Our approach}
We first introduce a modeling framework to learn reinforcement learning models under triage, which
builds upon the framework of options~\cite{SUTTON1999181, Precup2002, Bacon2017, jain2021safe}.
In the framework of options, an option policy determines which intra-option policy picks actions 
until termination, as dictated by a termination policy, at which point the procedure is repeated.
In our modeling framework, the machine and human policies are the intra-option policies, the 
triage policy is the option policy, and the termination policy does not need to be explicitly 
defined because options are interrupting~\cite{SUTTON1999181}, \ie, whenever the current 
intra-option policy is no longer the best choice, termination occurs.
Here, note that, in contrast with the original framework of options, one of the intra-option 
policies---the human policy---is fixed. 

Building on the above modeling framework, we introduce a two-stage actor-critic method to train the triage policy and the machine policy. 
In the first stage, we perform offline, off-policy training of the machine and triage policies using human data gathered 
in an environment where the human has operated on their own. 
In the second stage, we perform on-policy training to optimize the machine and triage policies trained in the first stage.
Our goal is to benefit from historical human data and guarantee safety in the first stage and to account for the impact that switching may have on the human policy in the second stage. 
%
%
Finally, we perform a variety of simulation experiments in
a synthetic car driving task.
%
Our results show that the machine models and triage policies trained using our two-stage method effectively 
complement human policies and outperform those provided by several competitive baselines.\footnote{To facilitate research in this area, we will release an open-source implementation of our algorithms with the final version of the paper.}

\xhdr{Further related work}
Our work is also related to the areas of learning to defer and human-machine collaboration.
In learning to defer, the goal is to design classifiers that are able to defer 
decisions~\citep{bartlett2008classification,cortes2016learning,geifman2019selectivenet,geifman2018bias,liu2019deep,ramaswamy2018consistent,thulasidasan2019combating, ziyin2020learning}.
To this end, they learn to defer either by considering the defer action as an additional label value 
or by training an independent classifier to decide about deferred decisions. However, there are no 
human experts who make predictions whenever the classifiers defer them---they just pay a constant 
cost every time they defer predictions. Moreover, the classifiers are trained to predict the labels 
of all samples in the training set as in full automation.
The extensive body of work on human-machine collaboration has predominantly considered settings in 
which the machine and the human interact with each other~\citep{brown2019machine,ghosh2020towards-aamas,grover2018learning, hadfield2016cooperative, haug2018teaching, kamalaruban2019interactive, macindoe2012pomcop,nikolaidis2015efficient, nikolaidis2017mathematical, radanovic2019learning, reddy2018shared, taylor2011integrating,torrey2013teaching,tschiatschek2019learner,walsh2011blending,wilson2018collaborative}.
In this context, our work is more closely connected to a line of work that studies switching behavior and switching costs in the context of human-computer interaction~\citep{sch1,sch2,sch3,sch5,sch4}, which we 
see as complementary.
	abbrvnat
	\section{Learning Under Triage as Learning Options}
	\label{sec:preliminaries}
	\looseness-1Let $\Scal$ be the state space, $\Acal$ be the set of actions, $c(s, a)$ be the environment cost of action $a$ at state $s$, and the transition dynamics of the environment are given by $p(s' \given s, a)$.
%
Then, in reinforcement learning under triage, one needs to find:
\begin{itemize}[leftmargin=0.8cm]
    \item[(i)] a triage policy $\tau(d \given s) : \Scal \times \{0, 1\} \rightarrow [0, 1]$, which determines who takes an action at state $s$ denoted by $d(s)$---the machine ($d(s) = 1)$ or the human ($d(s) = 0$). In the remainder, for simplicity, we will write $\tau(d = 1 \given s) = \tau(s)$.
    \item[(ii)] a machine policy $\pi_{\MM}(a \given s) : \Scal \times \Acal \rightarrow [0, 1]$,     which determines which actions are taken for those states $s$ for which $d(s) = 1$.
\end{itemize}
In the above, the human takes actions according to a policy 
$\pi_{\HH}(a \given s) : \Scal \times \Acal \rightarrow [0, 1]$.
Here, for simplicity, we assume that the human policy satisfies the Markov property, \ie, it depends only on the current state $s$, a common assumption in the machine learning, 
psychology, cognitive science and economics literature. 
While it is possible to convert a non-Markovian human policy into a Markovian one in certain cases 
just by changing the state representation~\cite{daw2014algorithmic}, addressing the problem 
of learning under triage in a semi-Markovian setting is left as a very interesting venue 
for future work.

Then, similarly as in standard reinforcement learning, we look for the triage and machine 
policies that result into the lowest expected cost (or, the highest expected reward).
%
To this end, we define the value function and the action value function under the triage policy $\tau$ as (see Appendix for details)
\begin{equation} \label{eq:value}
    v^{\tau}(s) =
\bar{c}_c(\tau(s)) + \sum_{a \in \mathcal{A}}\left[ \tau(s) \pi_{\MM}(a \given  s) \right. \\
 \left. +(1-\tau(s))\pi_{\HH}(a\given s) \right] \sum_{s' \in \Scal}p(s'\given s,a) \left[c(s,a) +v^{\tau}(s')\right]
\end{equation}
and 
\begin{equation} \label{eq:q}
q^{\tau}(s,a) =\sum_{s' \in \Scal}p(s'\given s,a)\left[ c(s,a) + v^{\tau}(s')\right],
\end{equation}
with $\bar{c}_c(\tau(s)) = \tau(s) c_c(1) + (1-\tau(s)) c_c(0)$, where $c_c(d)$ is the cost of giving control to the human ($d=0$) or the machine ($d=1$) at state $s$, and $c(s, a)$ is the environment cost. Note that we consider an undiscounted reward setting and assume that termination occurs surely under any policy for any initial state~\cite{Puterman1994}.

To design our two-stage actor-critic method, it will also be useful to look at the 
problem from the perspective of the options framework~\cite{SUTTON1999181}.
Under this framework, an option policy determines which intra-option policy picks 
actions until termination, as dictated by a termination policy, at which point the 
procedure is repeated.
In our setting, the machine policy $\pi_{\MM}$ and the human policy $\pi_{\HH}$ 
are the intra-option policies, the triage policy $\tau$ is the option policy, and 
the termination policy does not need to be explicitly defined because options are considered as interrupting~\cite{SUTTON1999181}, \ie, whenever the current intra-option policy is no longer the best choice, termination occurs.
In this context, the choice of intra-option policies depends on the option value function 
\begin{equation} \label{eq:option-q}
\begin{split}
Q^\tau(s,1) &= c_c(1) + \sum_{a \in \mathcal{A}} \pi_{\MM}(a \given  s) q^{\tau}(s,a)\\
Q^\tau(s,0) &= c_c(0) + \sum_{a \in \mathcal{A}} \pi_{\HH}(a\given s) q^{\tau}(s,a),
\end{split}
\end{equation}
%
which is essentially the action value function for options.


%
	
	\section{Overview of Two-stage Actor-Critic Method}
	\label{sec:method}
	To train the triage policy and machine policy, we introduce a two-stage actor-critic method:
\begin{itemize}[leftmargin=0.8cm]
    \item[I.] The first stage performs offline, off-policy training using human data gathered in an 
    environment where the human has operated on their own. More formally, it seeks to find the triage 
    and machine policy that minimize the mean of the value function $v^{\tau}$ with respect to the 
    stationary state distribution induced by the human policy $\pi_{\HH}$ operating on its own.

    \item[II.] The second stage performs on-policy training to optimize the machine and triage 
    policies trained in the first stage by minimizing the mean of the value function $v^{\tau}$ with 
    respect to the stationary state distribution induced by the human policy $\pi_{\HH}$ and the 
    machine policy $\pi_{\MM}$ operating together, as dictated by the triage policy $\tau$.
\end{itemize}

To this end, we will parameterize both the machine policy (the actor) and the value function (the critic),
and express the triage policy in terms of the parameterized value function. In the next two sections, we 
provide more details of each stage in turn. All the proofs are provided in the Appendix.
	
	\section{Offline Off-policy Training Using Human Data}
	\label{sec:off-policy}
	In this section, we first discuss the training of the pa\-ra\-me\-te\-rized machine policy under the true value function and then discuss how to approximate the true value function.
Throughout the section, we build upon a recent line of work on offline off-policy training~\cite{Sutton16, imani2019offpolicy, pmlr-v40-Yu15}, which we adapt to our 
specific problem setting. 
Similarly as in this line of work, we assume that the behavioral policy---the 
human policy---satisfies the coverage assumption and, for any initial state, 
termination occurs surely under the target policy---the human and machine policies 
operating together, as dictated by the triage policy.

\xhdr{Actor}
Let $\Mcal(\Theta)$ be a class of parameterized machine policies. Then, our goal 
is to find the parameters $\theta \in \Theta$ that minimize the following objective function:
\begin{equation} \label{eq:loss-mu-offline}
    J(\theta) = \EE_{s \sim d_{\pi_{\HH}}} [v^{\tau}(s)] = \sum_{s \in \Scal} d_{\pi_{\HH}}(s)\Big( \bar{c}_c(\tau(s)) \\
     + \sum_{a \in \Acal} \left[\tau(s)\pi_{\MM,\theta}(a\given s) + (1 - \tau(s))\pi_{\HH}(a\given s)\right] q^{\tau}(s,a) \Big),
\end{equation}
where $d_{\pi_{\HH}}$ denotes the stationary state distribution induced by the human policy $\pi_{\HH}(\cdot \given s)$. 

To facilitate our analysis, we will initially assume that $\tau$ is fixed and independent of $\theta$ and later on relax this assumption. 
Under this assumption, the gradient of the above objective function is given by the following theorem:
\begin{theorem} \label{thm:off-policy-pg}
The gradient of the function $J(\theta)$ with respect to the parameters $\theta$ is given by:
\begin{equation} \label{eq:gradient-j-off-policy}
    \frac{\partial J}{ \partial \boldsymbol\theta} 
    = \sum_{s \in \Scal} m(s) \sum_{a \in \Acal} \frac{\partial \pi_{\MM,\theta}}{\partial \boldsymbol\theta} q^{\tau}(s,a)
\end{equation}
where $\tau$ is a fixed triage policy independent of $\theta$, the often called emphatic weightings $\mb = [m(s)]_{s \in \Scal}$ are given by $\mb = (\Ib - \Pb^T)^{-1}\Db \, \db_{\pi_{\HH}}$ with 
$\db_{\pi_{\HH}} = [d_{\pi_{\HH}}(s)]_{s \in \Scal}$,
$\Db = \diag(\tau)$ is a diagonal matrix with entries based on $\tau(s)$, and 
$\Pb = [P(s,s')]_{s, s' \in \Scal}$ where
\begin{equation*}
P(s,s') = \sum_{a \in \Acal}\left( \tau(s)\pi_{\MM,\theta}(a\given s) \right. \\
\left. + (1 - \tau(s))\pi_{\HH}(a\given s) \right)p(s' \given s, a).
\end{equation*}
%
\end{theorem}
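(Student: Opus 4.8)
The plan is to differentiate the Bellman-type recursion~\eqref{eq:value} for $v^{\tau}$ with respect to $\boldsymbol\theta$, recognize the resulting system as a linear fixed point, invert it, and substitute back into~\eqref{eq:loss-mu-offline}; the emphatic weightings $\mb$ then emerge as the vector that, after collecting terms, multiplies the per-state gradient $\sum_{a}(\partial\pi_{\MM,\theta}(a\given s)/\partial\boldsymbol\theta)\,q^{\tau}(s,a)$. The key simplification is that $\tau$ is held fixed, so in~\eqref{eq:value} the only $\boldsymbol\theta$-dependence is through $\pi_{\MM,\theta}$: the cost-of-control term $\bar{c}_c(\tau(s))$ and the human contribution $(1-\tau(s))\pi_{\HH}(a\given s)$ vanish under $\partial/\partial\boldsymbol\theta$.

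First I would differentiate~\eqref{eq:value}. Using~\eqref{eq:q} and the fact that $c(s,a)$ is independent of $\boldsymbol\theta$, we have $\partial q^{\tau}(s,a)/\partial\boldsymbol\theta = \sum_{s'\in\Scal}p(s'\given s,a)\,\partial v^{\tau}(s')/\partial\boldsymbol\theta$, and the product rule gives, for every $s\in\Scal$,
\[
\frac{\partial v^{\tau}(s)}{\partial\boldsymbol\theta}
= \tau(s)\sum_{a\in\Acal}\frac{\partial\pi_{\MM,\theta}(a\given s)}{\partial\boldsymbol\theta}\,q^{\tau}(s,a)
\;+\; \sum_{s'\in\Scal}P(s,s')\,\frac{\partial v^{\tau}(s')}{\partial\boldsymbol\theta},
\]
where the coefficients $\tau(s)\pi_{\MM,\theta}(a\given s)+(1-\tau(s))\pi_{\HH}(a\given s)$ regroup exactly into the matrix entry $P(s,s')$ of the statement. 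Writing $\mathbf{x}=[\partial v^{\tau}(s)/\partial\boldsymbol\theta]_{s\in\Scal}$ and $\mathbf{h}=[\sum_{a}(\partial\pi_{\MM,\theta}(a\given s)/\partial\boldsymbol\theta)\,q^{\tau}(s,a)]_{s\in\Scal}$, this is the linear system $\mathbf{x}=\Db\,\mathbf{h}+\Pb\,\mathbf{x}$ with $\Db=\diag(\tau)$.

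Next I would invert $\Ib-\Pb$. Since termination occurs surely under the target (triage) policy, $\Pb$ is the transition matrix of a transient chain, so $\Ib-\Pb$ is nonsingular and $(\Ib-\Pb)^{-1}=\sum_{k\ge0}\Pb^{k}$ converges; hence $\mathbf{x}=(\Ib-\Pb)^{-1}\Db\,\mathbf{h}$. Substituting into the objective and using that $d_{\pi_{\HH}}$ is a well-defined state distribution (the coverage/ergodicity assumption on the human policy), $\partial J/\partial\boldsymbol\theta = \sum_{s\in\Scal}d_{\pi_{\HH}}(s)\,\partial v^{\tau}(s)/\partial\boldsymbol\theta = \db_{\pi_{\HH}}^{\top}(\Ib-\Pb)^{-1}\Db\,\mathbf{h}$. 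Transposing the row vector $\db_{\pi_{\HH}}^{\top}(\Ib-\Pb)^{-1}\Db$ that multiplies $\mathbf{h}$ identifies the emphatic weightings $\mb$ --- with $\diag(\tau)$ weighting each state by how often control passes to the machine there and $(\Ib-\Pb^{\top})^{-1}$ accumulating the downstream occupancy of that state along the triage-induced chain started from $d_{\pi_{\HH}}$ --- and yields $\partial J/\partial\boldsymbol\theta = \sum_{s}m(s)\sum_{a}(\partial\pi_{\MM,\theta}(a\given s)/\partial\boldsymbol\theta)\,q^{\tau}(s,a)$, i.e.~\eqref{eq:gradient-j-off-policy}.

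The main obstacle is handling the \emph{indirect} effect of $\boldsymbol\theta$ on the objective: perturbing $\pi_{\MM,\theta}$ changes not only the action mixture at the current state but also $q^{\tau}$ at every downstream state, and it is precisely the resolution of that recursion --- the geometric factor $(\Ib-\Pb)^{-1}$ --- that converts the naive, on-policy-looking term $\sum_{s}d_{\pi_{\HH}}(s)\tau(s)\sum_{a}(\partial\pi_{\MM,\theta}/\partial\boldsymbol\theta)q^{\tau}$ into the correctly reweighted gradient. Two further points require care: the invertibility of $\Ib-\Pb$ and convergence of its Neumann series must be argued from the sure-termination hypothesis (there is no discount factor to fall back on), and one must keep the transpose and the placement of $\Db$ straight so that the interest $\tau$ attaches to each \emph{visited} state rather than to the initial one --- the off-policy feature that makes $\mb$ genuinely different from $\db_{\pi_{\HH}}$ and that the emphatic-weighting machinery is designed to track.
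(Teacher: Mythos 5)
Your proposal is correct and follows essentially the same route as the paper's proof in Appendix~B: differentiate the Bellman recursion for $v^{\tau}$ (only the machine-mixture term depends on $\theta$ since $\tau$ is fixed), write the resulting linear system as $\dot{\boldsymbol{v}}^{\tau} = \Db\Gb + \Pb\dot{\boldsymbol{v}}^{\tau}$, invert to obtain $(\Ib-\Pb)^{-1}\Db\Gb$, and left-multiply by $\db_{\pi_{\HH}}^{\top}$ to identify $\mb^{\top} = \db_{\pi_{\HH}}^{\top}(\Ib-\Pb)^{-1}\Db$. Your only additions are the explicit Neumann-series justification that $\Ib-\Pb$ is invertible under sure termination (which the paper takes as a standing assumption), and note that transposing your row vector yields $\mb=\Db(\Ib-\Pb^{\top})^{-1}\db_{\pi_{\HH}}$, which agrees with the paper's own derivation but not with the ordering $(\Ib-\Pb^{\top})^{-1}\Db\,\db_{\pi_{\HH}}$ printed in the theorem statement --- a discrepancy already present in the paper rather than in your argument.
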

However, in practice, to apply the above theorem, we need an estimate of both the emphatic weightings and the product of the gradients 
of the machine action policy and the action value function from a set of recorded human trajectories $\Dcal = \{\Tcal\}$ with $\Tcal = \{(s_t,a_t)\}$. 
For the former, the following proposition equips us with a sequential update rule for an emphatic weighting estimator 
$M_t$ with desirable properties:
%
%
\begin{proposition} \label{prop:emphasis}multline
Let $M_t \leftarrow d(s_t) + \varrho_{t-1} M_{t-1}$, where $\varrho_t = \frac{\varpi(a_t\given s_t)}{\pi_{\HH}(a_t\given s_t)}$ and \begin{equation*}
    \varpi(a_t\given s_t) = d(s_t)\pi_{\MM,\theta}(a_t\given s_t) + (1-d(s_t))\pi_{\HH}(a_t\given s_t).
\end{equation*}
Then, it holds that $m(s) = d_{\pi_{\HH}}(s) \lim_{t \rightarrow \infty} \EE[M_t\given s_t=s]$.
\end{proposition}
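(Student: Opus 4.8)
The goal is to show that the recursively defined estimator $M_t \leftarrow d(s_t) + \varrho_{t-1}M_{t-1}$ satisfies $m(s) = d_{\pi_{\HH}}(s)\lim_{t\to\infty}\EE[M_t \given s_t = s]$, where $\mb = (\Ib - \Pb^T)^{-1}\Db\,\db_{\pi_{\HH}}$ from Theorem~\ref{thm:off-policy-pg}. The natural strategy is the standard emphatic-TD argument (cf.\ Sutton, Mahmood \& White, and the follow-up emphatic off-policy literature): unroll the recursion to write $M_t$ as a sum over the history, take the conditional expectation given $s_t = s$ under the behavioral (human) distribution, recognize the resulting series as a Neumann expansion of $(\Ib - \Pb^T)^{-1}$ acting on $\Db\,\db_{\pi_{\HH}}$ up to a reweighting by $1/d_{\pi_{\HH}}(s)$, and pass to the limit.

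**Key steps.**
First I would unroll the recursion: $M_t = \sum_{k=0}^{t} \big(\prod_{j=t-k}^{t-1}\varrho_j\big) d(s_{t-k})$, with the empty product equal to $1$. Second, I would take $\EE[\,\cdot \given s_t = s]$ where the trajectory is generated by the human policy $\pi_{\HH}$ (so states are distributed according to $d_{\pi_{\HH}}$ in the limit and transitions follow $\pi_{\HH}$). The crucial observation is the standard importance-sampling identity: for a single step, $\EE_{a\sim\pi_{\HH}(\cdot\given s)}[\varrho(a\given s)\,f(s')\,\Pr(s'\given s,a)] = \sum_{a}\varpi(a\given s)p(s'\given s,a)f(s') = \sum_{s'} P(s',s)\,\dots$ — i.e.\ the product of $\pi_{\HH}$-transition probability with the importance ratio $\varrho$ collapses to the \emph{target} transition kernel $P$, and when we are conditioning the \emph{later} state on a \emph{fixed} $s_t = s$ this running backwards in time produces powers of $\Pb^T$. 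Concretely, $d_{\pi_{\HH}}(s)\,\EE[\varrho_{t-1}\cdots\varrho_{t-k}\,d(s_{t-k})\given s_t=s]$ should equal the $s$-th entry of $(\Pb^T)^{k}\Db\,\db_{\pi_{\HH}}$ (using that the chain is at stationarity, $\Pr(s_{t-k}=s') \to d_{\pi_{\HH}}(s')$, and that $d(s_{t-k})$ inside contributes the $\Db$ factor). Third, summing over $k$ from $0$ to $\infty$ gives $d_{\pi_{\HH}}(s)\lim_{t}\EE[M_t\given s_t=s] = \big[\sum_{k\ge 0}(\Pb^T)^k \Db\,\db_{\pi_{\HH}}\big]_s = \big[(\Ib - \Pb^T)^{-1}\Db\,\db_{\pi_{\HH}}\big]_s = m(s)$, where convergence of the Neumann series is justified by the standing assumption that termination occurs surely under the target policy (so the spectral radius of $\Pb$ is below $1$, equivalently $(\Ib - \Pb^T)$ is invertible with a convergent series).

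**Main obstacle.**
The delicate point is the backward-in-time conditioning: we condition on $s_t = s$ but the recursion multiplies by ratios $\varrho_{t-1},\dots,\varrho_{t-k}$ indexed by \emph{earlier} times, so one must be careful that $\EE[\varrho_{j}\given s_{j+1},\dots]$ is taken with respect to the correct conditional law and that iterating this identity really does build $(\Pb^T)^k$ rather than $\Pb^k$ — the transpose appears precisely because we are propagating information about a fixed current state back to the distribution of past states. I would make this rigorous by an induction on $k$: define $M_t^{(k)}$ as the truncation of $M_t$ to its first $k{+}1$ terms, show $d_{\pi_{\HH}}(s)\EE[M_t^{(k)}\given s_t=s] \to \sum_{i=0}^{k}[(\Pb^T)^i\Db\,\db_{\pi_{\HH}}]_s$ using the one-step importance-sampling identity at the induction step, and then exchange the $k\to\infty$ and $t\to\infty$ limits using the geometric tail bound coming from sure termination (which also guarantees the expectations are finite and the coverage assumption makes every $\varrho$ well defined). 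Everything else — unrolling, the single-step identity, recognizing the Neumann series — is routine.
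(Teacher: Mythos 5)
Your proposal is correct, and its engine is the same as the paper's: the backward Bayes step $\PP\{s_{t-1}=s',a_{t-1}=a'\given s_t=s\}=d_{\pi_{\HH}}(s')\pi_{\HH}(a'\given s')p(s\given s',a')/d_{\pi_{\HH}}(s)$ combined with the importance ratio $\varrho$, which cancels $\pi_{\HH}$ against $\varpi$ and turns the behavioral backward kernel into the target kernel, producing $\Pb^T$. The only difference is in the bookkeeping: the paper stops after one backward step, obtains the fixed-point equation $\bar{\mb}=\Db\,\db_{\pi_{\HH}}+\Pb^T\bar{\mb}$ for $\bar m(s):=d_{\pi_{\HH}}(s)\lim_t\EE[M_t\given s_t=s]$, and solves it by inverting $\Ib-\Pb^T$, whereas you unroll the recursion completely and sum the Neumann series $\sum_{k\ge 0}(\Pb^T)^k\Db\,\db_{\pi_{\HH}}$. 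These are two presentations of the same computation, but your version buys a little extra rigor: the geometric tail bound from sure termination actually establishes that the limit $\lim_t\EE[M_t\given s_t=s]$ exists and that the limit interchange is valid, whereas the paper's fixed-point manipulation presupposes existence of that limit. One caveat that applies equally to both arguments and that you should not claim to have fully resolved: the decision $d(s_{t-k})$ appearing as the summand is the same random variable that enters $\varpi(a_{t-k}\given s_{t-k})$ inside $\varrho_{t-k}$, so replacing each by its conditional mean ($\tau(s_{t-k})$ and the $\tau$-mixture, respectively) requires either treating $\varrho$ as the already-averaged ratio or handling the correlation $\EE[d\cdot\varpi]=\tau\pi_{\MM,\theta}$ explicitly; the paper factors these terms the same way you do, so this is not a gap relative to the paper's own proof.
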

For the latter, we resort to a commonly used estimator with sequential update rule $\rho_t \delta_t \nabla_{\theta} \ln \pi_{\MM,\theta}$, where $\delta_t = c(s_t,a_t) + v^{\tau}(s_{t+1}) -  v^{\tau}(s_t)$ and $\rho_t = \frac{\pi_{\MM,\theta}(a_t\given s_t)}{\pi_{\HH}(a_t\given s_t)}$. 
After combining both estimators, we have the following proposition:
%
\begin{proposition} \label{prop:gradient-estimator}
Let $M_t$ be given by Proposition~\ref{prop:emphasis}, $\delta_t = c(s_t,a_t) + v^{\tau}(s_{t+1}) -  v^{\tau}(s_t)$ and $\rho_t = \frac{\pi_{\MM,\theta}(a_t\given s_t)}{\pi_{\HH}(a_t\given s_t)}$. Then, 
 \begin{equation*}
     \lim_{t \rightarrow \infty} \EE[M_t\rho_t \delta_t \nabla_{\theta} \ln \pi_{\MM,\theta}] = \sum_{s \in \Scal} m(s) \sum_{a \in \Acal} \frac{\partial \pi_{\MM,\theta}}{\partial \boldsymbol\theta} q^{\tau}(s,a)
 \end{equation*}
 \end{proposition}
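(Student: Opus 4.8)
The plan is to condition on the current state $s_t$, exploit a conditional independence between the emphatic‐weighting estimator $M_t$ and the one‐step estimator $\rho_t\,\delta_t\,\nabla_\theta \ln \pi_{\MM,\theta}$, evaluate the two resulting conditional expectations separately, and then let $t \to \infty$ while invoking Proposition~\ref{prop:emphasis}.

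First, I would pin down what each factor depends on. Unrolling the recursion of Proposition~\ref{prop:emphasis} shows that $M_t$ is a deterministic function of $s_{0:t}$, $a_{0:t-1}$ and the triage decisions $d(s_0),\dots,d(s_t)$, each drawn (with fresh randomness) from $\tau(\cdot \given s_i)$; in particular $M_t$ does not depend on $a_t$. On the other hand, since the recorded trajectories are generated by the human operating alone, the factor $Y_t := \rho_t\,\delta_t\,\nabla_\theta \ln \pi_{\MM,\theta}(a_t \given s_t)$ is a function only of $(s_t, a_t, s_{t+1})$ with $a_t \sim \pi_{\HH}(\cdot \given s_t)$ and $s_{t+1} \sim p(\cdot \given s_t, a_t)$. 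By the Markov property, conditioned on $s_t = s$ the pair $(a_t, s_{t+1})$ is independent of the past $(s_{0:t-1}, a_{0:t-1})$, and it is independent of all the triage decisions because those use only the states plus independent randomness and $d(s_t)$ does not affect the logged action $a_t$ nor its successor. Hence $\EE[M_t Y_t \given s_t = s] = \EE[M_t \given s_t = s]\,\EE[Y_t \given s_t = s]$, and averaging over $s_t$ gives
\begin{equation*}
\EE\!\left[M_t\,\rho_t\,\delta_t\,\nabla_\theta \ln \pi_{\MM,\theta}\right] = \sum_{s \in \Scal} \Pr[s_t = s]\; \EE[M_t \given s_t = s]\; \EE[Y_t \given s_t = s].
\end{equation*}

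Next I would evaluate the one‐step conditional expectation $\EE[Y_t \given s_t = s]$. Taking the expectation over $a_t \sim \pi_{\HH}(\cdot \given s)$ and using $\rho_t = \pi_{\MM,\theta}(a_t \given s_t)/\pi_{\HH}(a_t \given s_t)$, importance sampling rewrites it as an expectation under $\pi_{\MM,\theta}$; moreover $\EE[\delta_t \given s_t = s, a_t = a] = c(s,a) + \sum_{s'} p(s' \given s,a)\, v^\tau(s') - v^\tau(s) = q^\tau(s,a) - v^\tau(s)$ by~\eqref{eq:q}. Using $\pi_{\MM,\theta}\,\nabla_\theta \ln \pi_{\MM,\theta} = \nabla_\theta \pi_{\MM,\theta}$ together with $\sum_{a} \nabla_\theta \pi_{\MM,\theta}(a \given s) = 0$, the baseline term $v^\tau(s)$ cancels and $\EE[Y_t \given s_t = s] = \sum_{a \in \Acal} \frac{\partial \pi_{\MM,\theta}(a \given s)}{\partial \boldsymbol\theta}\, q^\tau(s,a)$, which is independent of $t$. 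Finally, letting $t \to \infty$ and using that $\Scal$ is finite so the limit passes through the sum: the logged state marginals converge to the stationary distribution of the human policy, $\Pr[s_t = s] \to d_{\pi_{\HH}}(s)$ (this is exact if the trajectories are started from $d_{\pi_{\HH}}$), and Proposition~\ref{prop:emphasis} gives $d_{\pi_{\HH}}(s)\lim_{t\to\infty}\EE[M_t \given s_t = s] = m(s)$. Combining these three ingredients yields the claimed identity, which together with Theorem~\ref{thm:off-policy-pg} is exactly $\partial J / \partial \boldsymbol\theta$, so the estimator is consistent.

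I expect the main obstacle to be the conditional‐independence step: carefully specifying the $\sigma$‐algebras generated by $M_t$ and by $Y_t$, and in particular arguing that the resampled triage decisions $d(s_i)$ entering $M_t$ do not couple with the logged action $a_t$ or the successor state $s_{t+1}$. Secondary care is needed to justify interchanging the limit with the summation over states—via finiteness of $\Scal$, or otherwise a uniform‐integrability / dominated‐convergence argument in the style of the emphatic‐TD analyses \citep{Sutton16, imani2019offpolicy} on which this construction is built.
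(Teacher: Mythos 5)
Your proposal is correct and follows essentially the same route as the paper's proof: condition on $s_t=s$, factor the expectation into $\EE[M_t\given s_t=s]$ times the one-step term, use the importance ratio to turn $\pi_{\HH}\cdot\rho_t\cdot\nabla_\theta\ln\pi_{\MM,\theta}$ into $\nabla_\theta\pi_{\MM,\theta}$, identify $\EE[\delta_t\given s,a]=q^{\tau}(s,a)-v^{\tau}(s)$, cancel the baseline via $\sum_a\nabla_\theta\pi_{\MM,\theta}(a\given s)=0$, and invoke Proposition~\ref{prop:emphasis} in the limit. You are in fact somewhat more careful than the paper about justifying the conditional-independence factorization and the interchange of limit and sum, which the paper simply asserts.
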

Consequently, the above results readily yield the following update rule for the parameters of the 
machine policy:
\begin{equation} \label{eq:actor}
     \theta_{t+1} \leftarrow \theta_{t} - \alpha_t M_t\rho_t \delta_t \nabla_{\theta} \ln \pi_{\MM,\theta}(a_t | s_t) |_{\theta = \theta_t}
\end{equation}
where $\alpha_t$ is the learning rate.
 
Next, we lift the assumption on the triage policy $\tau$ and let it be an $\epsilon$-greedy policy with respect to the option value function $Q^{\tau}$:
\begin{equation} \label{eq:update-switching}
    \tau_{\theta}(s) := 
    \begin{cases}
    1 - \frac{\epsilon}{2} & \text{if } Q^{\tau}(s, 1) \leq Q^{\tau}(s, 0) \\
    \frac{\epsilon}{2} & \text{otherwise}
    \end{cases}
\end{equation}
While under this definition, $\tau$ depends on $\theta$ because $Q^{\tau}$ implicitly depends on $\theta$, the following proposition shows that gradient of the objective function $J(\theta)$ remains unchanged. 
%
%
%
\begin{theorem} \label{thm:off-policy-pg-epsilon-greedy}
Let $\tau(s)$ be an $\epsilon$-greedy policy with respect to $Q^{\tau}$, then,
$\partial J/\partial \boldsymbol\theta$ is still given by Eq.~\ref{eq:gradient-j-off-policy}.
\end{theorem}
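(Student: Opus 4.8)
The plan is to reduce the statement to Theorem~\ref{thm:off-policy-pg} by exploiting that the $\epsilon$-greedy triage policy is \emph{piecewise constant} in $\theta$. Since $\tau_{\theta}(s)$ takes only the two values $\epsilon/2$ and $1-\epsilon/2$, its value at a fixed state $s$ can change only as $\theta$ crosses the locus where $Q^\tau(s,1)=Q^\tau(s,0)$. By Eq.~\ref{eq:option-q} and Eq.~\ref{eq:q}, $Q^\tau(s,\cdot)$ depends on $\theta$ only through $v^\tau$ and $\pi_{\MM,\theta}$, both of which are continuous in $\theta$ for $\pi_{\MM,\theta}\in\Mcal(\Theta)$; hence, generically, this locus is a null set, and at every $\theta_0$ off it there is a neighborhood $U$ of $\theta_0$ on which $\tau_{\theta}$ coincides with the \emph{fixed} policy $\tau_{\theta_0}$. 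On $U$ we then have $J(\theta)=\sum_{s\in\Scal}d_{\pi_{\HH}}(s)\,v^{\tau_{\theta_0}}(s)$, i.e.\ precisely the objective of Eq.~\ref{eq:loss-mu-offline} with $\tau$ treated as independent of $\theta$, so $\partial J/\partial\boldsymbol\theta$ at $\theta_0$ equals the gradient of that fixed-$\tau$ objective, which Theorem~\ref{thm:off-policy-pg} evaluates to Eq.~\ref{eq:gradient-j-off-policy} with $\tau=\tau_{\theta_0}$; evaluated at $\theta_0$ this is exactly Eq.~\ref{eq:gradient-j-off-policy} with $\tau=\tau_{\theta}$, as claimed.

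Concretely, I would write this out by differentiating the Bellman identity Eq.~\ref{eq:value} for $v^\tau$ and bookkeeping which terms are new relative to the fixed-$\tau$ computation behind Theorem~\ref{thm:off-policy-pg}. Writing the effective one-step policy as $\mu_\theta(a\given s)=\tau_{\theta}(s)\pi_{\MM,\theta}(a\given s)+(1-\tau_{\theta}(s))\pi_{\HH}(a\given s)$, the only additional contributions when $\tau$ is allowed to depend on $\theta$ are: (i) $\partial\bar{c}_c(\tau_{\theta}(s))/\partial\boldsymbol\theta=(c_c(1)-c_c(0))\,\partial\tau_{\theta}(s)/\partial\boldsymbol\theta$; (ii) the part of $\partial\mu_\theta(a\given s)/\partial\boldsymbol\theta$ coming from the $\tau_{\theta}(s)$ factors, namely $(\partial\tau_{\theta}(s)/\partial\boldsymbol\theta)\,(\pi_{\MM,\theta}(a\given s)-\pi_{\HH}(a\given s))$; and (iii) the analogous $\tau_{\theta}$-derivative appearing inside the transition matrix $\Pb$. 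Note that $q^\tau(s,a)$ depends on $\theta$ only through $v^\tau$ by Eq.~\ref{eq:q}, so it contributes no separate $\tau_{\theta}$-term. Every one of (i)--(iii) is proportional to $\partial\tau_{\theta}/\partial\boldsymbol\theta$, which vanishes at every point of differentiability by the argument above; hence the same linear fixed-point equation for $\partial v^\tau/\partial\boldsymbol\theta$ is obtained as in the fixed-$\tau$ case, and its solution, averaged against $\db_{\pi_{\HH}}$, reproduces the emphatic-weighting expression of Eq.~\ref{eq:gradient-j-off-policy} verbatim.

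The main obstacle is making ``$\partial\tau_{\theta}/\partial\boldsymbol\theta=0$'' rigorous, since $\tau_{\theta}$, $v^\tau$ and $Q^\tau$ are jointly defined by a coupled system ($v^\tau$ solves the Bellman equation Eq.~\ref{eq:value} for $\tau_{\theta}$, while $\tau_{\theta}$ is $\epsilon$-greedy with respect to $Q^\tau$, itself determined by $v^\tau$). Two ingredients are needed. First, local well-posedness and continuity: under the standing assumption that termination occurs surely under the target policy, $\Ib-\Pb$ is invertible (the corresponding Bellman operator is a contraction in a suitable weighted sup-norm), so $\theta\mapsto v^\tau$, and hence $\theta\mapsto Q^\tau$, is continuous for each fixed $\tau$; a short continuity-plus-uniqueness argument then fixes $\tau_{\theta}=\tau_{\theta_0}$ on a neighborhood of any $\theta_0$ at which no state is tied (the $\epsilon$-greedy decisions read off $Q^{\tau_{\theta_0}}$ for $\theta$ near $\theta_0$ are unchanged, so $\tau_{\theta_0}$ itself solves the coupled system there). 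Second, genericity: the tie set $\{\theta:Q^\tau(s,1)=Q^\tau(s,0)\text{ for some }s\in\Scal\}$ is a finite union of level sets and so has Lebesgue measure zero, so the identity holds for almost every $\theta$ (or, if one insists on every $\theta$, as a one-sided directional derivative). Granting these regularity points, the rest is just the bookkeeping already carried out for Theorem~\ref{thm:off-policy-pg}.
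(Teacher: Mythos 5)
Your proposal is correct and follows essentially the same route as the paper: differentiate the Bellman identity, observe via Eq.~\ref{eq:option-q} that every new term is proportional to $\partial\tau_{\theta}(s)/\partial\boldsymbol\theta$ multiplied by $Q^{\tau}(s,1)-Q^{\tau}(s,0)$, and argue this product vanishes so the fixed-$\tau$ gradient of Theorem~\ref{thm:off-policy-pg} is recovered. The only difference is cosmetic: at the tie locus the paper proves a small lemma showing the one-sided limits of that product are zero (since the $Q$-difference itself vanishes there), whereas you dispose of the ties by a measure-zero/one-sided-derivative remark; both resolve the same edge case.
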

So far, we have assumed that the human policy $\pi_{\HH}(a \given s)$ and the value function $v^{\tau}$ are known. 
However, in practice, we need to estimate their values to be able to compute $\varrho_t$ and $\delta_t$.
To estimate the human policy, we can just use a Montecarlo estimate $\hat{\pi}_{\HH}(a \given s)$ from the recorded human trajectories $\Dcal$, \ie, $\hat{\pi}_{\HH}(a \given s) = \sum_{t} \II(a_t = a, s_t = s)/\II(s_t = s)$.
%
%
To estimate the value function, we will resort to a critic, which we discuss next.



\xhdr{Critic}
We adapt the one step update proposed in the options framework~\cite{Bacon2017} to our offline, off-policy setting with emphatic weightings. 
To this end, we start by rewriting the value function in terms of the option value function, \ie, 
\begin{equation*}
    v^{\tau}(s) = \tau(s)Q^{\tau}(s, 1) + (1 - \tau(s))Q^{\tau}(s, 0),
\end{equation*}
and approximating the option value function using a linear function, \ie, $Q^{\tau}_{\boldsymbol \vartheta}(s, d(s)) = \boldsymbol \vartheta^T \boldsymbol \phi(s, d(s)) + c_c(d(s))$, 
where $\varthetab \in \mathbb{R}^n$ is a parameter vector and $\boldsymbol\phi(s, d(s)) \in \mathbb{R}^n$ 
is a feature vector.\footnote{We note that the linear representation is considered for the theoretical results; in the experiments we use neural representations.\label{footnote:linear_representation}}
%
Then, given a set of recorded human trajectories $\Dcal$, we use the following update rule, based on
ETD(0)~\cite{Sutton16}, to estimate $\varthetab$:
%
\begin{equation} \label{eq:vartheta_update}
\boldsymbol \vartheta_{t+1}  \leftarrow \boldsymbol \vartheta_t + \beta_t F_t \varrho_t [ C_{t+1} + Q^{\tau}_{\boldsymbol \vartheta_t}(s_{t+1}, d(s_{t+1})) \\ 
- Q^{\tau}_{\boldsymbol \vartheta_t}(s_t, d(s_t)) ] \nabla Q^{\tau}_{\boldsymbol \vartheta}(s_t, d(s_t))|_{\boldsymbol \vartheta = \boldsymbol \vartheta_t}, 
\end{equation}
 where $C_{t+1}=c(s_t,a_t) + c_c(d(s_t))$, 
$\varrho_t = \frac{\varpi(a_t\given s_t)}{\pi_{\HH}(a_t\given s_t)}$, 
$F_t = i(s_t) + \varrho_{t-1}F_{t-1}$ is the emphatic weighting, with $i(s_t) \in \{0,1\}$,
and $\beta_t$ is the learning rate. 
Here, one may think of setting $i(s_t) = 1$ for states that have been visited by the human operating on their own and $i(s_t) = 0$ otherwise and can estimate the human policy $\pi_{\HH}$ similarly as in the actor training.
Moreover, note that $F_t$ is the counterpart of $M_t$ in the actor training with $i(s_t)$ instead of $d(s_t)$.
%
 
Whenever $\tau$ is fixed and independent of $Q^{\tau}$, we are able to prove almost sure asymptotic convergence of the above update 
rule. 
In practice, we have observed good empirical performance whenever $\tau$ is a $\epsilon$-greedy policy with respect to $Q^{\tau}$. 
%
Before we proceed with the analysis, we rewrite the second term of the right 
hand side of Eq.~\ref{eq:vartheta_update} as:
\begin{equation*}
    \beta_t \left[ \underbrace{F_t \varrho_t( C_{t+1} + c_c(d(s_{t+1})) - c_c(d(s_t)))\boldsymbol\phi(s_{t},d(s_t))}_{\boldsymbol b_t} \right. \\
    - \left. \underbrace{F_t \varrho_t \boldsymbol\phi(s_t,d(s_t))(\boldsymbol\phi(s_t,d(s_t)) - \boldsymbol\phi(s_{t+1},d(s_{t+1})))^T }_{\boldsymbol A_t} \boldsymbol \vartheta_t \right]
\end{equation*}
Following~\citet{pmlr-v40-Yu15}, if we are able to show that the following conditions are satisfied, then we can 
guarantee almost sure asymptotic convergence of the above update rule:
%
%
 \begin{itemize}[leftmargin=0.8cm]
    \item[---] The human policy satisfies the coverage assumption and induces an irreducible Markov chain. 
    \item[---] Termination occurs surely under target policy for any initial state, meaning $(I - \Pb)^{-1}$ exists.
    \item[---] The learning rate sequence $\{\beta_t\}$ satisfies $\beta_t  \in (0,1], \sum_t\beta_t < \infty, \sum_t \beta_t^2 < \infty$, is deterministic and eventually non increasing.
    \item[---] $\Ab = \lim_{t \rightarrow \infty} \EE_{\mu}[\boldsymbol A_t]$ is non singular.
\end{itemize}
The first two conditions are satisfied by assumption, the third condition only requires to set $\beta_t$ accordingly. 
and the fourth condition is satisfied if $\Ab$ is positive definite, a property that we also need for the update rule to be stable~\cite{Sutton16}. 
To prove the latter, we first rewrite $\Ab$ as (refer to the Appendix for details) $\boldsymbol A = \boldsymbol \Phi^T \boldsymbol F (I - \boldsymbol P
) \boldsymbol \Phi$,
%
%
where $\boldsymbol \Phi = \Db \boldsymbol\Phi_{1} + (I - \Db)\boldsymbol\Phi_{0}$, with $\boldsymbol \Phi_d$ a $|S|\times n$ matrix with rows $\boldsymbol \phi^T(s,d)$, 
$\Pb, \Db$ are defined in Theorem~\ref{thm:off-policy-pg} 
and $\boldsymbol F$ a diagonal matrix with elements $f(s)$. 
Then, it is easy to see that, to prove that $\Ab$ is positive definite, it is sufficient to prove that 
$\boldsymbol K = \boldsymbol F (I - \boldsymbol P)$ is positive definite. To this end, we have
the following claim:
\begin{claim} \label{claim}
A symmetric matrix $\boldsymbol K$, with positive diagonal and negative off diagonal elements, is positive definite if each row sum plus the corresponding column sum of it is positive
\end{claim}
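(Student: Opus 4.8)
The plan is to establish positive definiteness directly from an explicit sum-of-squares decomposition of the quadratic form $x^{\top}\boldsymbol K x$, which turns the three hypotheses (positive diagonal, negative off-diagonal, positive row-plus-column sums) into a manifestly non-negative expression. Since $\boldsymbol K$ is symmetric, its $i$-th row sum and $i$-th column sum coincide, so the hypothesis ``each row sum plus the corresponding column sum is positive'' is just $R_i := \sum_{j} K_{ij} > 0$ for every index $i$. I would keep the statement phrased in terms of row \emph{plus} column sums because in the application one first replaces the non-symmetric matrix $\boldsymbol K = \boldsymbol F(I - \boldsymbol P)$ by its symmetric part $\tfrac12(\boldsymbol K + \boldsymbol K^{\top})$ and uses $x^{\top}\boldsymbol K x = x^{\top}\tfrac12(\boldsymbol K + \boldsymbol K^{\top})x$; the $i$-th row sum of the symmetric part is exactly half of the $i$-th row-plus-column sum of $\boldsymbol K$, and the off-diagonal entries of the symmetric part are again negative, so the symmetric statement is what is actually needed.

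The key step is the identity, valid for every symmetric $\boldsymbol K$ and every vector $x$,
\begin{equation*}
x^{\top}\boldsymbol K x \;=\; \sum_{i} R_i\, x_i^{2} \;+\; \sum_{i<j} (-K_{ij})\,(x_i - x_j)^{2}.
\end{equation*}
I would prove this by expanding the right-hand side: the cross terms of $(x_i - x_j)^2$ reproduce $2\sum_{i<j}K_{ij}x_ix_j = \sum_{i\ne j}K_{ij}x_ix_j$, while the square terms contribute $-\sum_{i}x_i^2\sum_{j\ne i}K_{ij}$, which cancels the off-diagonal part of $\sum_i R_i x_i^2 = \sum_i K_{ii}x_i^2 + \sum_i x_i^2 \sum_{j \ne i} K_{ij}$, leaving exactly $\sum_i K_{ii}x_i^2 + \sum_{i\ne j}K_{ij}x_ix_j = x^{\top}\boldsymbol K x$. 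This is the only genuine computation; it is elementary but requires careful index bookkeeping (splitting the double sum over $i<j$ into its two symmetric halves), and that is the step I expect to demand the most attention.

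Given the identity the conclusion is immediate. By hypothesis $-K_{ij} > 0$ for $i\neq j$ (non-positivity of the off-diagonal entries already suffices), so $\sum_{i<j}(-K_{ij})(x_i - x_j)^{2} \ge 0$; and $R_i > 0$ for all $i$, so $\sum_i R_i x_i^{2} \ge 0$. Hence $x^{\top}\boldsymbol K x \ge 0$ for all $x$. For strictness, if $x \neq 0$ choose an index $k$ with $x_k \neq 0$; then $\sum_i R_i x_i^{2} \ge R_k x_k^{2} > 0$, so $x^{\top}\boldsymbol K x > 0$, i.e. $\boldsymbol K$ is positive definite. As a by-product, note that $R_i > 0$ together with the non-positivity of the off-diagonal entries forces $K_{ii} \ge R_i > 0$, so the positive-diagonal assumption is in fact automatic and is not invoked separately in the argument.
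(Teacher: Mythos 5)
Your proof is correct, but it takes a different route from the paper's. The paper reduces the claim to a standard fact: it forms $\boldsymbol S = \boldsymbol K + \boldsymbol K^{\top}$, observes that the row-plus-column-sum condition together with the sign pattern makes $\boldsymbol S$ strictly diagonally dominant with positive diagonal, and then invokes the known result that such symmetric matrices are positive definite. You instead prove positive definiteness from scratch via the explicit decomposition
\begin{equation*}
x^{\top}\boldsymbol K x \;=\; \sum_{i} R_i\, x_i^{2} \;+\; \sum_{i<j} (-K_{ij})\,(x_i - x_j)^{2},
\end{equation*}
which I have checked and which is valid for symmetric $\boldsymbol K$; with $R_i>0$ and $K_{ij}\le 0$ off the diagonal this immediately gives $x^{\top}\boldsymbol K x>0$ for $x\neq 0$. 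Your argument is more self-contained (it does not appeal to the diagonal-dominance criterion, which itself usually rests on Gershgorin or an equivalent computation), and it exposes two facts the paper leaves implicit: that strict negativity of the off-diagonal entries can be weakened to non-positivity, and that the positive-diagonal hypothesis is redundant given the other two. Your remark about passing to the symmetric part $\tfrac12(\boldsymbol K+\boldsymbol K^{\top})$ is also apt, since in the application $\boldsymbol K=\boldsymbol F(I-\boldsymbol P)$ is not symmetric even though the claim is stated for symmetric matrices; the paper's own proof performs exactly this reduction. The paper's route is shorter if one takes the diagonal-dominance criterion as given; yours buys a fully elementary, sum-of-squares certificate at the cost of one careful index computation.
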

Then, we only need to compute each row sum plus the corresponding column sum of $\boldsymbol K$ to verify if $\boldsymbol K$ 
is positive definite.
Since $\Pb$ is a stochastic matrix, the row sums of $\boldsymbol K$ will be $0$, so the column sums of 
$\boldsymbol K$ must be positive. 
Now, we can find the column sums of $\boldsymbol K$ if we multiply it with a row vector with all elements equal 
to $1$. 
So the row vector of the column sums of $\boldsymbol K$ are given by:
\begin{equation*}
    1^T\boldsymbol F(I - \Pb) 
    \ost{(i)}{=}\boldsymbol \db_{\pi_{\HH}}^T \boldsymbol D_{i}  (I - \boldsymbol P)^{-1}(I - \boldsymbol P)
    = \db_{\pi_{\HH}}^T \boldsymbol D_{i},
\end{equation*}
where $\boldsymbol D_{i} = \diag(i(s))$, 
$\mathbf{f} = [f(s)]_{s \in \Scal}$ and, 
in (i), we have used that $\boldsymbol f = (I - \boldsymbol P^T)^{-1}\boldsymbol D_{i}\boldsymbol \mu $. 
As a result, it readily follows that, if $i(s) = 1$ for all $s \in \Scal$, then $\boldsymbol K$ is positive definite. 
Even if that is not the case, if $i(s) \geq 0$ and there are $n$ linearly independent feature vectors of states with $f(s) > 0$, it can be shown that $\boldsymbol K$ is also positive definite~\cite{pmlr-v40-Yu15}.

\vspace{1mm} We summarize the resulting actor-critic method for 
off-policy training in Algorithm~\ref{alg:off-policy}. 

\begin{algorithm}[t]
  \caption{Actor-critic method for off-policy training\label{alg:off-policy}}
  \textbf{Input:} $Q^{\tau}_\vartheta$, $\pi_{\MM, \theta}$, $\pi_{\HH}$, $c$, $c_c$, $\epsilon$, $i$, $\alpha$, $\beta$, set of trajectories $\Dcal$ \\ 
  \textbf{Initialize:}  $F_0 \gets 0$, $M_0 \gets 0$, $\varrho_{0} \gets 1$, $t \gets 0$, $\theta_0 \sim \Ncal(0,1)$,$\vartheta_0 \sim \Ncal(0,1)$. \\ 
  
  \vspace{1mm}
  \For{\textnormal{trajectory} $\Tcal$ in $\Dcal$}{
\For{$(s_t, a_t, s_{t+1})$ in $\Tcal$}{
  $t \gets t + 1$\\
  $d_t \gets \epsilon$-greedy($Q^{\tau}_{\vartheta_t}(s_t, 0), Q^{\tau}_{\vartheta_t}(s_t, 1)$)\\
\tcp{critic update}
  $C_{t+1} \gets c(s_t, a_t) + c_c(d_t)$\\
  $F_t \gets i(s_t) + \varrho_{t-1}F_{t-1}$\\
  $\varrho_t \gets \frac{d_t \pi_{\MM}(a_t|s_t) + (1-d_t) \pi_\HH(a_t |s_t)}{\pi_\HH(a_t |s_t)}$\\
  $d_{t+1} \gets \epsilon$-greedy($Q^{\tau}_{\vartheta_t}(s_{t+1}, 0), Q^{\tau}_{\vartheta_t}(s_{t+1}, 1)$)\\
  $\delta^{Q}_t \gets  C_{t+1} + Q^{\tau}_{ \vartheta_t}(s_{t+1}, d_{t+1}) - Q^{\tau}_{\vartheta_t}(s_t, d_t)$ \\
$\vartheta_{t+1}  \gets \vartheta_t + \beta_t F_t \varrho_t \delta^Q_t\nabla_{\vartheta} Q^{\tau}_{\vartheta}(s_t, d_t)|_{\vartheta = \vartheta_t}$\\

\tcp{actor update}
$\delta_t = c(s_t,a_t) + Q^{\tau}_{ \vartheta_t}(s_{t+1}, d_{t+1}) - Q^{\tau}_{\vartheta_t}(s_t, d_t)$ \\
$\rho_t = \frac{\pi_{\MM,\theta}(a_t\given s_t)}{\pi_{\HH}(a_t\given s_t)}$\\
$M_t \gets d_t + \varrho_{t-1}M_{t-1} $\\
$\theta_{t+1} \gets \theta_{t} - \alpha_t M_t\rho_t \delta_t \nabla_{\theta} \ln \pi_{\MM,\theta}(a_t | s_t) |_{\theta = \theta_t}$
}
}
\Return $\theta_t$, $\vartheta_t$
\end{algorithm}
	
	\section{On-policy Training Using Machine and Human Data}
	\label{sec:on-policy}
In this section, we proceed similarly as in the previous section. We first discuss the training of the parameterized machine policy under the true value function and then the approximation of the true value function. 

\xhdr{Actor}
Let $\Mcal(\Theta)$ be a class of parameterized machine policies and, given a human policy $\pi_{\HH}$ and machine policy $\pi_{\MM, \theta}$, denote the action policy induced by 
the triage policy $\tau$ as
\begin{align}
\label{def:overall_policy}
    \pi_{\theta}(a \given s) = \tau(s) \pi_{\MM, \theta}(a \given s) + (1 - \tau(s)) \pi_\HH(a \given s)
\end{align}
Then, our goal is 
to find the parameters $\theta \in \Theta$ that minimize the following objective function:
\begin{equation} \label{eq:loss-mu-online}
    J(\theta) = \EE_{s \sim d_{\pi_{\theta}}} [v^{\tau}(s)] = \sum_{s \in \Scal} d_{\pi_{\theta}}(s) \Big( \bar{c}_c(\tau(s)) \\
     + \sum_{a \in \Acal} \left[\tau(s)\pi_{\MM,\theta}(a\given s) + (1 - \tau(s))\pi_{\HH}(a\given s)\right] q^{\tau}(s,a) \Big),
\end{equation}
where $d_{\pi_{\theta}}$ denotes the stationary state distribution induced by the policy $\pi_{\theta}$.

Assume $\tau$ is fixed and independent of $\theta$. Then, we have the following theorem, which readily follows from the standard policy gradient theorem~\cite{sutton2018reinforcement}:
\begin{theorem} \label{thm:on-policy-pg}
The gradient of the function $J(\theta)$ with respect to the parameters $\theta$ is given by:
\begin{equation} \label{eq:gradient-j-on-policy}
    \frac{\partial J}{ \partial \boldsymbol\theta} 
    = \EE_{s \sim d_{\pi_{\theta}}}\left[\tau(s) v^{\tau}(s) \frac{\nabla_\theta \pi_{\MM,\theta}(a \given s)}{\pi_{\MM,\theta}(a \given s)}\right]
\end{equation}
\end{theorem}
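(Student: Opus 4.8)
The plan is to derive Eq.~\ref{eq:gradient-j-on-policy} as a corollary of the standard episodic policy gradient theorem~\cite{sutton2018reinforcement}, applied to the Markov decision process whose acting policy is the composite policy $\pi_{\theta}$ of Eq.~\ref{def:overall_policy} and whose per-step cost is the environment cost $c(s,a)$ augmented by the state-dependent switching cost $\bar{c}_c(\tau(s))$. The first observation is that, because $\tau$ is held fixed and independent of $\theta$, the $\theta$-dependence of $\pi_{\theta}(a\given s) = \tau(s)\pi_{\MM,\theta}(a\given s) + (1-\tau(s))\pi_{\HH}(a\given s)$ lives entirely in the machine component, so $\nabla_\theta \pi_{\theta}(a\given s) = \tau(s)\,\nabla_\theta \pi_{\MM,\theta}(a\given s)$ and $\nabla_\theta \bar{c}_c(\tau(s)) = 0$; in particular, the triage and switching costs drop out of the gradient entirely.

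Next I would differentiate the Bellman recursion $v^{\tau}(s) = \bar{c}_c(\tau(s)) + \sum_{a\in\Acal} \pi_{\theta}(a\given s)\,q^{\tau}(s,a)$ (Eqs.~\ref{eq:value}--\ref{eq:q}) with respect to $\theta$ and unroll it along trajectories generated by $\pi_{\theta}$, exactly as in the textbook proof of the policy gradient theorem. The assumption invoked throughout this section---that termination occurs surely under the target policy for every initial state, so $(I-\Pb)^{-1}$ exists---guarantees that the unrolled series converges and that the $\nabla_\theta q^{\tau}$ contributions telescope, leaving
\begin{equation*}
\frac{\partial J}{\partial \boldsymbol\theta} = \sum_{s\in\Scal} d_{\pi_{\theta}}(s)\,\tau(s) \sum_{a\in\Acal} \nabla_\theta \pi_{\MM,\theta}(a\given s)\,q^{\tau}(s,a).
\end{equation*}
Along the way I would also check the usual point that differentiating through the stationary state distribution $d_{\pi_{\theta}}$ contributes nothing extra; here this needs a little care because the objective is the $d_{\pi_{\theta}}$-weighted average of $v^{\tau}$ rather than a fixed-initial-state value, so I would verify that the visitation-count recursion defining $d_{\pi_{\theta}}$ makes the $\nabla_\theta d_{\pi_{\theta}}$ terms cancel.

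Finally I would put the gradient in the compact form of Eq.~\ref{eq:gradient-j-on-policy}: writing $\nabla_\theta \pi_{\MM,\theta}(a\given s) = \pi_{\MM,\theta}(a\given s)\,\frac{\nabla_\theta \pi_{\MM,\theta}(a\given s)}{\pi_{\MM,\theta}(a\given s)}$ turns the inner sum into an expectation over $a\sim\pi_{\MM,\theta}(\cdot\given s)$, and since $\sum_{a\in\Acal}\nabla_\theta \pi_{\MM,\theta}(a\given s) = \nabla_\theta 1 = 0$ one may introduce the state value $v^{\tau}(s)$ as a baseline without altering the expectation, yielding the stated expression. I expect the main obstacle to be precisely these last two bookkeeping steps---the baseline rewriting that brings $v^{\tau}(s)$ into the expression, and the verification that the stationary-distribution gradient vanishes in this undiscounted, $d_{\pi_\theta}$-weighted formulation---while everything else is the standard policy gradient computation specialised to a mixture policy, which is why the statement follows readily.
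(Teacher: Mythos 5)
Your overall route is the same as the paper's: the paper offers no separate proof of Theorem~\ref{thm:on-policy-pg} beyond the remark that it ``readily follows from the standard policy gradient theorem,'' and your reduction---observing that $\nabla_\theta \pi_{\theta}(a\given s) = \tau(s)\,\nabla_\theta\pi_{\MM,\theta}(a\given s)$, that $\bar c_c(\tau(s))$ is $\theta$-independent, and then unrolling the differentiated Bellman recursion for the mixture policy---is exactly that reduction, carried out correctly up to the intermediate identity
\begin{equation*}
\frac{\partial J}{\partial\boldsymbol\theta}=\sum_{s\in\Scal} d_{\pi_\theta}(s)\,\tau(s)\sum_{a\in\Acal}\nabla_\theta\pi_{\MM,\theta}(a\given s)\,q^\tau(s,a).
\end{equation*}

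However, your last step is a genuine gap. The baseline trick you invoke uses $\sum_{a}\nabla_\theta\pi_{\MM,\theta}(a\given s)=0$ to justify replacing $q^\tau(s,a)$ by $q^\tau(s,a)-v^\tau(s)$; it cannot replace $q^\tau(s,a)$ by $v^\tau(s)$ alone, because $q^\tau(s,a)$ genuinely depends on $a$ and is not annihilated by the score function. Indeed, if the expectation in Eq.~\ref{eq:gradient-j-on-policy} is read with $a\sim\pi_{\MM,\theta}(\cdot\given s)$, its right-hand side as stated equals $\sum_{s} d_{\pi_\theta}(s)\,\tau(s)\,v^\tau(s)\sum_{a}\nabla_\theta\pi_{\MM,\theta}(a\given s)=0$, so the literal statement cannot be reached from your (correct) intermediate form. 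What you have actually proved is the version of the theorem with $q^\tau(s,a)$ (equivalently, the advantage $q^\tau(s,a)-v^\tau(s)$) in place of $v^\tau(s)$---which is also what the paper effectively uses: its appendix notes that the implementation replaces ``$v^\tau(s_t)$'' by $c(s_t,a_t)+v^\tau(s_{t+1})-v^\tau(s_t)$, a sample of exactly that advantage. Separately, the question you raise and then defer---whether differentiating through the $\theta$-dependent weighting $d_{\pi_\theta}$ contributes nothing---is not settled by the episodic policy gradient theorem (whose objective is a fixed-start-state value, not a $d_{\pi_\theta}$-weighted average) and is left unaddressed by the paper as well; a complete proof would need either to justify dropping the $\nabla_\theta d_{\pi_\theta}$ term or to restate the objective with a $\theta$-independent state distribution.
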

The above result yields the following update rule for the parameters of the machine policy:
\begin{align}
    \label{eq:gradient-update-on-policy}
    \theta_{t + 1} \leftarrow \theta_t - \alpha_t d(s_t) v^{\tau}(s_t) \frac{\nabla_\theta \pi_{\MM,\theta}(a_t\given s_t)|_{\theta = \theta_t}}{\pi_{\MM,\theta}(a_t\given s_t)} ,
\end{align}
where $\alpha_t$ is the learning rate. Here, note that the parameters of the machine policy are only updated whenever the triage policy let the machine take action ($d(s_t) = 1$).
Moreover, if $\alpha_t$ is chosen properly, one can use standard arguments to show that the above update rule converges. 

Now, assume the triage policy $\tau$ is chosen $\epsilon$-greedily with respect to the option value function, \ie, $\tau$ is given by Eq.~\ref{eq:update-switching}. 
Then, it can be shown similarly as in the previous section that the gradient of $v^{\tau}$ remains the same and thus Theorem~\ref{thm:on-policy-pg} still holds and we can still use 
the update rule given by Eq.~\ref{eq:gradient-update-on-policy} to find the parameters of the machine policy. 
%
%

In the above, we have assumed that the value function $v^{\tau}$ is known. However, in practice, we will estimate their value using a critic, which we discuss next.

\looseness-1\xhdr{Critic}
Here, we use the same linear approximation for the option value function as in the previous section, \ie, $\hat Q^{\tau}(s, d(s)) = \boldsymbol \vartheta^T \boldsymbol \phi(s, d(s)) + c_c(d(s))$ (also, see Footnote~\ref{footnote:linear_representation}). 
Then, we use the following update rule, based on TD(0)~\cite{sutton2018reinforcement}, to estimate the parameters $\varthetab$, \ie,
\begin{equation*}
    \vartheta_{t+1} \leftarrow \vartheta_{t} - \beta_t [ C_{t+1} + Q^{\tau}_{\boldsymbol \vartheta_t}(s_{t+1}, d(s_{t+1})) \\
    - Q^{\tau}_{\boldsymbol \vartheta_t}(s_t, d(s_t)) ] \nabla Q^{\tau}_{\boldsymbol \vartheta}(s_t, d(s_t))|_{\boldsymbol \vartheta = \boldsymbol \vartheta_t},
\end{equation*}
which can be shown to converge using standard arguments whenever $\tau$ is fixed and independent of $Q$.

\vspace{1mm} We summarize the resulting actor-critic method for on-policy 
training in Algorithm~\ref{alg:on-policy}. 
\begin{algorithm}[t!]
  \caption{Actor-critic method for on-policy training\label{alg:on-policy}}
  \textbf{Input:} $Q^{\tau}_\vartheta$, $\pi_{\MM, \theta}$, $c$, $c_c$, $\epsilon$, $\alpha$, $\beta$, \#episodes. \\
  
   \textbf{Initialize} $\theta_0 \sim \Ncal(0,1)$,$\vartheta_0 \sim \Ncal(0,1)$, $t \gets 0$\\
  
  \vspace{1mm}
  \For{$j \in 1,2,...,$\#\textnormal{episodes}}{
\While{episode is not terminated}{
  $t \gets t + 1$\\
  $d_t \gets \epsilon$-greedy($Q^{\tau}_{\vartheta_t}(s_t, 0), Q^{\tau}_{\vartheta_t}(s_t, 1)$)\\
  Sample $a_t$ based on $d_t$ and $s_t$; get next state $s_{t+1}$\\
\tcp{critic update}
  $C_{t+1} \gets c(s_t, a_t) + c_c(d_t)$\\

  $d_{t+1} \gets \epsilon$-greedy($Q^{\tau}_{\vartheta_t}(s_{t+1}, 0), Q^{\tau}_{\vartheta_t}(s_{t+1}, 1)$)\\
  
  $\delta^{Q}_t \gets  C_{t+1} + Q^{\tau}_{ \vartheta_t}(s_{t+1}, d_{t+1}) - Q^{\tau}_{\vartheta_t}(s_t, d_t)$ \\
$\vartheta_{t+1}  \gets \vartheta_t + \beta_t  \delta^Q_t\nabla_{\vartheta} Q^{\tau}_{\vartheta}(s_t, d_t)|_{ \vartheta = \vartheta_t}$\\

\tcp{actor update}
$v^{\tau}_{\vartheta_t} \gets d_t Q_{\theta_t}^{\tau}(s_t, 1) + (1 - d_t) Q_{\theta_t}^{\tau}(s_t, 0) $

$\theta_{t+1} \gets \theta_{t} - \alpha_t d_t v^{\tau}_{\vartheta_t} \nabla_{\theta} \ln \pi_{\MM,\theta}(a_t | s_t) |_{\theta = \theta_t}$

  }
  
  }
  \Return $\theta_t$, $\vartheta_t$
\end{algorithm}

	
	\section{Experiments}
	\label{sec:synthetic}
%
The goal of our experiments is to demonstrate that our two-stage actor-critic method
is able to identify the limitations and complementary strengths of a given human policy 
and a machine policy from a given pa\-ra\-me\-te\-rized class of models. 
%
To this end, we focus on three research questions (\textit{RQs}):
\begin{itemize}
    \item[--] \looseness-1\textit{RQ1}: Within a given parameterized class, can our method find a machine model that when operating under triage achieves better performance than the human policy or a machine policy trained to operate on its own? 
    \item [--]\textit{RQ2}: In scenarios where humans wish to keep agency, can our method 
    find triage policies that \emph{only} give control to the machine policy to avoid perilous situations?
    \item[--] \textit{RQ3}: In scenarios where the human policy changes due to switching, how much competitive advantage does our two-stage training process bring by adapting to these changes? 
\end{itemize}

%

\subsection{Environment for synthetic car driving task}
%
We consider a synthetic driving task and a environment based on previous work~\cite{levine2010feature,kamalaruban2019interactive,meresht2021learning}.
%

\xhdr{Environment design, episodes, and objective} 
Based on the environment used in \cite{meresht2021learning}, we begin by generating a grid-based task, consisting of three lanes with infinite rows. Each row $r$ is characterized by a traffic level $\gamma_{r} \in \{\texttt{no-car}, \texttt{light}, \texttt{heavy}\}$, based on which, the cell types of $r \in \{\texttt{road}, \texttt{grass}, \texttt{stone}, \texttt{car}\}$ are sampled independently at random.
%
Refer to the Appendix for more details on the sampling of traffic levels and cell types. 
%
To train the policies, we consider an episodic setting in which, at the beginning of every episode, the driving agent starts interacting with the environment from the middle lane of a randomly chosen row and terminates after a finite horizon of $20$ steps.\footnote{The presence of the agent in a cell of type \texttt{car} does not indicate episode termination, it just symbolizes proximity to another car, enough to be in danger, but not an accident.} The overall objective of an agent is to minimize the cost of the trajectory induced by the sequence of the actions taken during each episode.

\xhdr{State space}
As a state representation we consider the current cell type (i.e., the type of the cell in which the agent is at the current time step), followed by the cell types of the next six rows in front.
For the state feature vector representation, we use one-hot encoding of four bits (one bit for each distinct  cell type), so the resulting state feature vector is of size $4 \times (1 + 6 \times 3)$, \ie, $76$ bits. Once there are less than six rows in the horizon, zero-padding is used. 
As an agent has to learn a generalized policy that takes actions for any input state, we do not consider tabular representations and instead use neural policies for the agents as discussed below.

%
\xhdr{Action space}
We assume that the chosen driving agent---which is the only one in motion---moves always forward at each time step $t$ and decides whether to go left, straight or right, taking one of the actions in $\Acal =\{\texttt{left}, \texttt{middle}, \texttt{right}\}$ respectively. If the agent is either in the leftmost or rightmost lane, it will never choose an action that leads out of the grid, \ie, never chooses \texttt{left} (\texttt{right}) when in leftmost (rightmost) lane.  

\looseness-1\looseness-1\xhdr{Cost functions}
To define the environment cost $c(s,a)$, we first associate a fixed cost related to each cell type indicating its negative impact on the driving agent. We set these cell type costs as follows: \texttt{grass} as $2$, \texttt{stone} as $4$, \texttt{car} as $10$,  and \texttt{road} as $0$. Given a state action pair $(s,a)$, we define the environment cost $c(s,a)$ as the cell type cost associated with the next cell when an action $a$ is taken from $s$. For the control cost $c_c$, we vary it across different evaluation scenarios.

\begin{figure*}[t!]
    \centering
    \begin{subfigure}[t]{0.3\textwidth}
        \includegraphics[width=\textwidth]{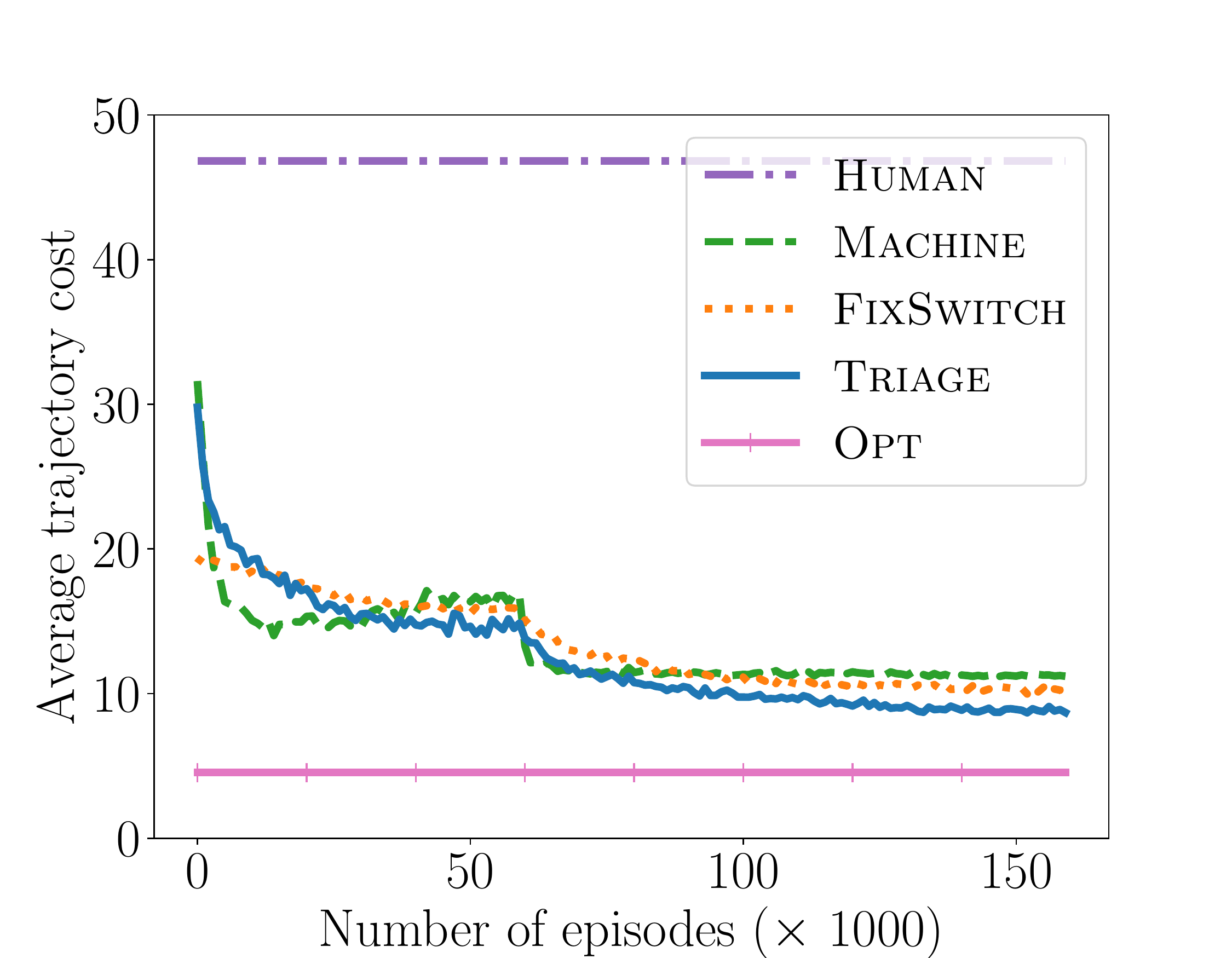}
        \caption{\textit{Scenario I}}
        \label{subfig:quantitative-a}
    \end{subfigure}
    \quad
    \begin{subfigure}[t]{0.3\textwidth}
        \includegraphics[width=\textwidth]{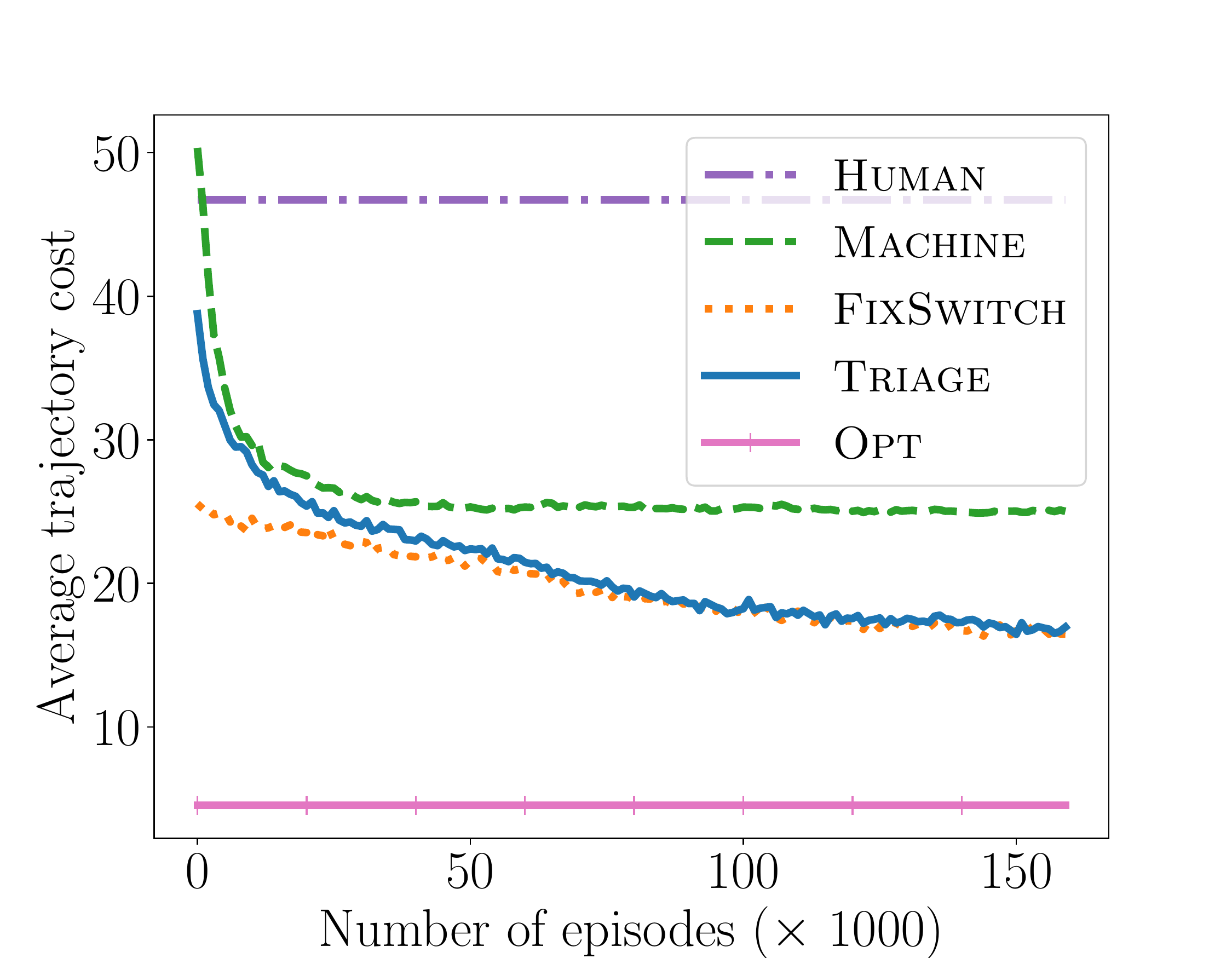}
        \caption{\textit{Scenario II}}
        \label{subfig:quantitative-b}
    \end{subfigure}
    \quad
    \begin{subfigure}[t]{0.3\textwidth}
        \includegraphics[width=\textwidth]{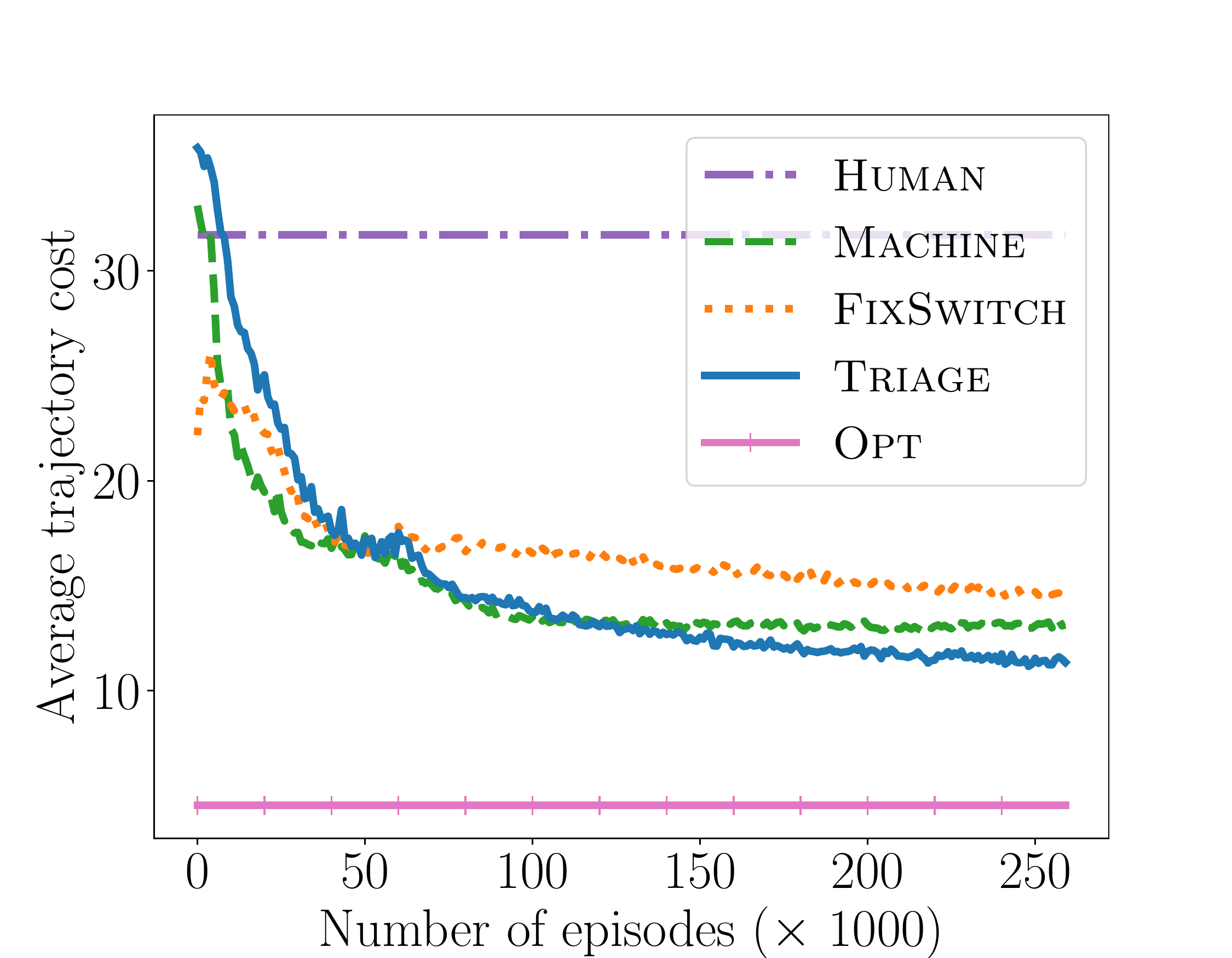}
        \caption{\textit{Scenario III}}
        \label{subfig:quantitative-c}
    \end{subfigure}
    \caption{Test average trajectory cost against the number of training episodes. For offline training, all methods use $60$K episodes. For online training, methods in \textit{Scenario I} and \textit{Scenario II} use $100$K episodes; methods in \textit{Scenario III} use $200$K episodes. See the main text for discussion of these results.
    }
    \label{fig:quantitative-plots}    
\end{figure*}



\subsection{Methods and scenarios}
\looseness-1We evaluate a total of five different methods (\alghuman, \algtriage, \algmachine, \algfxd, \algopt). These methods are evaluated in three distinct scenarios (\textit{I}, \textit{II}, \textit{III}), one for each \textit{RQ}. Method \algopt represents an optimal planning given a specific episode, providing an upper bound on the performance. Below, we describe the remaining methods and scenarios. We have provided the code and further implementation details as part of the supplementary material, including the neural network architectures used for the actor and the critic in different methods.\footnote{We ran all experiments on a Debian machine equipped with Intel\textsuperscript{\textregistered} Xeon\textsuperscript{\textregistered} E7-8857v2 CPU @ 3.00GHz and 16GB memory.}


\looseness-1\xhdr{Method \alghuman}
This method corresponds to the human agent acting alone. In our implementation, we assume a myopic human policy which chooses an action that 
 minimizes the one-step environment cost (with ties being broken randomly).
Apart from these myopic choices, the human policy is also suboptimal because of ignorance regarding a scenario-dependent type of obstacle that is mistakenly perceived as \texttt{road}. 

\xhdr{Method \algtriage} This is our two-stage actor-critic method. We consider an $\epsilon$-greedy triage policy with respect to the option value function,
that chooses between a human policy (as described in \alghuman) and a trainable
machine policy (from a parametric class of models as specified per scenario).
%
%
%
%
%
For the first offline training stage using human data, we use trajectories of the human policy operating alone for several episodes ($60$K) and apply Algorithm \ref{alg:off-policy} for training actor and critic networks. In this stage, there is no interaction with the true environment and the training uses the recorded transitions from the human trajectories; for the computation of the emphatic weightings, we assume access to the true human policy distribution.  For the second online training stage, both the actor and critic networks continue their training using Algorithm \ref{alg:on-policy} while being deployed in the true environment. For the number of training episodes per stage, see Figure \ref{fig:quantitative-plots}.
We use the following schedule for $\epsilon$: (a) in the first half of the offline stage we set $\epsilon=0.2$, (b) for the second half of the offline stage we set $\epsilon=0.1$, and (c) for the entire online stage we decay $\epsilon$ from $0.1$ with rate $1/\sqrt{t}$.

%

\xhdr{Method \algmachine} This method has the same two-stage actor-critic framework as in \algtriage, with the following crucial difference: instead of an $\epsilon$-greedy triage policy, we assume a triage policy that always chooses the machine policy.

\looseness-1\xhdr{Method \algfxd} This method also uses the same training framework as in \algtriage, except that instead of a trainable machine policy, the $\epsilon$-greedy triage policy chooses between the human policy and a pretrained fixed machine policy. This pretrained machine policy (\ie, a fixed actor) corresponds to the machine policy at the end of the offline training stage in \algmachine. 


%
 

\xhdr{Scenarios} We design the following three scenarios, one for each \textit{RQ}, to systematically evaluate different methods: 
\begin{itemize}
    \item[--]\looseness-1\textit{Scenario I} 
    exemplifies a situation where the performance of the human and the machine is suboptimal when operating alone: the human policy ignores cells of type \texttt{car} and any machine policy in the class ignores cells of type \texttt{grass}. We set $c_c(0) = c_c(1) = 0$. 
    
    \looseness-1\item[--]\textit{Scenario II} 
    exemplifies a situation where the human wishes to keep agency. To this end, we set $c_c(0) = 0$,  $c_c(1) = 1$. Here, the human policy ignores cells of type \texttt{car} whereas any machine policy in the class perfectly recognizes all cell types. 
    
    \item[--]\textit{Scenario III} 
   exemplifies a situation where the human policy changes in the presence of switching. We consider a human policy which always ignores cells of type \texttt{grass}; furthermore, the human policy momentarily ignores \texttt{car} cell type at the time step of switching (as the human might not be attentive at this time). In this scenario, any machine policy ignores \texttt{stone}, and we set $c_c(0) = 1$, $c_c(1) = 0$. 

\end{itemize}

%

%
%
%

\subsection{Results}
In this section, we discuss results for three scenarios in the context of different \textit{RQs}. For a quantitative comparison, we compute the average trajectory cost for the methods on a separate test set that corresponds to $1000$ randomly generated episodes (each with horizon of $20$ steps).
Figure~\ref{fig:quantitative-plots} summarizes the results for different number of training episodes. Here, for evaluation in the offline stage, we set $\epsilon = 0$ when computing the performance on the test set; in the online stage, we use the same $\epsilon$ as the one used in the corresponding training episode. For a qualitative comparison, we provide illustrative examples of trajectories induced by the examined methods at the end of the training in Figure~\ref{fig:qualitative-plots}. 

\xhdr{Results for \textit{Scenario I} (\textit{RQ1})} Recall that for the \textit{Scenario I}, the human policy is suboptimal because it chooses actions myopically and ignores \texttt{car} cell types, and any machine policy in the considered parametric class is suboptimal because it ignores \texttt{grass} cell types. Figure~\ref{subfig:quantitative-a} shows that our method \algtriage achieves better performance than \alghuman or \algmachine. Moreover, Figure~\ref{subfig:qualitative-a} highlights how our method \algtriage is able to identify the limitations and complementary strengths of the human and the machine agent, and selectively gives control to each of them.  


\xhdr{Results for \textit{Scenario II} (\textit{RQ2})} Recall that for the \textit{Scenario II}, the human agent wishes to keep agency and this is captured by control costs $c_c(0) = 0$,  $c_c(1) = 1$. 
Similar to \textit{Scenario I}, Figure~\ref{subfig:quantitative-b} shows that \algtriage achieves a higher performance than both \alghuman and \algmachine.
Moreover, for this scenario, \algtriage is able to find triage policies that only give control to the machine policy to avoid perilous situations (see  Figure \ref{subfig:qualitative-b} for an illustrative example). Quantitatively, we note that the \algtriage method gives about only $25\%$ of control to the machine policy per episode in \textit{Scenario II}, in comparison to about $45\%$ in \textit{Scenario I}.

\begin{figure}[t!]
    \begin{subfigure}[t]{0.32\textwidth}
	    \begin{minipage}[c][1.2\width]{1\textwidth}
	    \centering
	        \includegraphics[width=0.31\textwidth]{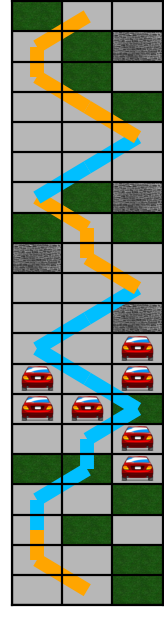}
	        \includegraphics[width=0.31\textwidth]{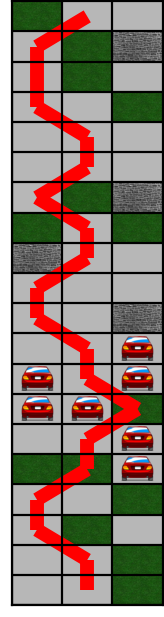}
	   \end{minipage}
	   \caption{\textit{Scenario I}} 
	   \label{subfig:qualitative-a}	   
    \end{subfigure}
    \hfill 	
    \begin{subfigure}[t]{0.32\textwidth}
	    \begin{minipage}[c][1.2\width]{1\textwidth}
	    \centering
	        \includegraphics[width=0.31\textwidth]{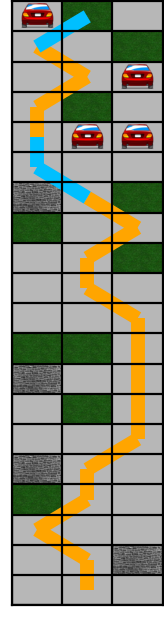}
	        \includegraphics[width=0.31\textwidth]{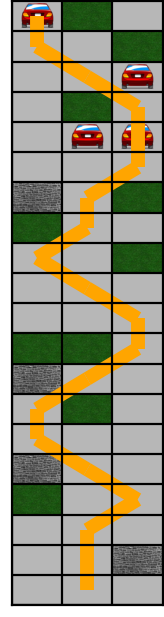}
        \end{minipage}
        \caption{\textit{Scenario II}} 
        \label{subfig:qualitative-b}        
    \end{subfigure}	
    \hfill	
    \begin{subfigure}[t]{0.32\textwidth}
	    \begin{minipage}[c][1.2\width]{1\textwidth}
	    \centering
	        \includegraphics[width=0.31\textwidth]{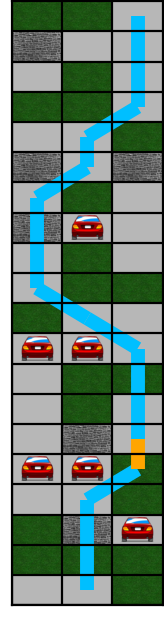}
	        \includegraphics[width=0.31\textwidth]{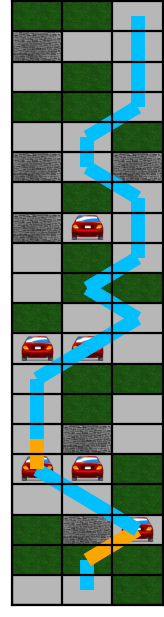}
	   \end{minipage}
	   \caption{\textit{Scenario III}}	
	   \label{subfig:qualitative-c}
   \end{subfigure}	
   \caption{Illustrative trajectories for different scenarios, where orange (blue) indicates human (machine) is in control and red indicates the optimal plan. 
   In each panel, the left trajectory always corresponds to \algtriage and the right trajectory corresponds to (a) \algopt, (b) \alghuman and (c) \algfxd.
   }
   \label{fig:qualitative-plots}   
\end{figure}

\looseness-1\xhdr{Results for \textit{Scenario III} (\textit{RQ3})} Recall that for the \textit{Scenario III}, there is a change in human behavior in the presence of switching. 
As a result, while in the previous two scenarios, the \algfxd method and the \algtriage method perform comparably,
in this scenario, \algtriage exhibits a competitive advantage
w.r.t. \algfxd, as shown in Figure~\ref{subfig:quantitative-c}.
This is because the \algfxd method uses a pretrained machine policy (a fixed actor) and therefore is unable to adapt to changes in the human policy, in contrast with the \algtriage method. 
%
Figure \ref{subfig:qualitative-c} (2nd row) exemplifies the adaptation of the \algtriage method, in contrast to that of \algfxd. 
Here, \algfxd gives control to the human as the machine policy ignores \texttt{stone} (see 3rd row, middle lane); however, this leads to an undesirable outcome as the human policy would ignore \texttt{car} momentarily after switching (see 3rd row, right lane). As can be seen, the \algtriage method has adapted to these changes and keeps the control with the machine policy in the 2nd row. The results in \textit{Scenario III} highlight the importance of the second stage in our method, which enables an adaptation to unforeseen behavioral changes of the human.  

	\section{Conclusions}
	\label{sec:conclusions}
	In this paper, we have initiated the development of reinforcement learning models that are optimized
to operate under algorithmic triage.
We have formalized the problem building upon the framework of options and introduced a two-stage actor-critic 
method to train both the triage policy and the policy of the reinforcement learning agent.
%
%
Our work opens many interesting venues for future work. For example, it would be interesting to derive 
convergence guarantees for the update rule used in the critic both during offline off-policy training and 
on-policy training whenever the triage policy depends on the option value function.
%
%
%
In that context, an analysis of convergence for the whole actor-critic algorithm would be a breakthrough result.
Moreover, in our theoretical results, we have assumed that the estimation of the human policy is perfect, however, 
it would be interesting to account for error estimations in the analysis.
%
%
%
%
%
Finally, it would be valuable to assess the performance of reinforcement learning models that are optimized to operate under
algorithmic triage using interventional experiments on a real-world application.
	
	\bibliographystyle{plainnat}	 
	\bibliography{refs}
	
	\onecolumn
	\clearpage
	\newpage
	
	\appendix
	\section{Derivation of Bellman's equations} \label{app:value-functions}
Let $C_{t+1} = c_c(d(s_{t})) + c(s_{t}, a_{t})$ be the total cost at time step $t$ and assume that the terminal states are absorbing, \ie, the only transition in those states are to themselves with zero cost. Then, the value function given the switching policy $\tau$ is given by:
\begin{align*}
    v^{\tau}(s) &= \EE \left[\sum_{k=0}^{\infty} C_{t+k+1}\given s_t \text{$=$} s\right] = \EE \left[ c_c(d(s_t)) + c(s_t, a_t) + \sum_{k=0}^{\infty} C_{t+k+2} \given s_t \text{$=$} s\right] \\ 
    &= 
    \bar{c}_c(\tau(s)) + \sum_{a \in \mathcal{A}}\left( \tau(s)\cdot \pi_{\MM}(a \given  s) + (1-\tau(s))\pi_{\HH}(a\given s) \right) \\
    & \qquad \qquad \qquad \qquad \qquad \qquad \times \sum_{s' \in \mathcal{S}}p(s'\given s,a) \left[c(s,a) + \EE_{\tau}\left[\sum_{k=0}^{\infty} C_{t+k+2} \given s_{t+1} \text{$=$} s' \right]\right] \\
    &= \bar{c}_c(\tau(s)) + \sum_{a \in \mathcal{A}}\left( \tau(s) \cdot \pi_{\MM}(a \given  s) + (1-\tau(s))\pi_{\HH}(a\given s) \right)\sum_{s' \in \mathcal{S}}p(s'\given s,a) \left[c(s,a) +v^{\tau}(s')\right]
\end{align*}
Moreover, the action value function given the switching policy $\tau$ is given by:
\begin{align*}
q^{\tau}(s,a) &= \EE_{\tau} \left[\sum_{k=0}^{\infty} C_{t+k+1}\given s_t \text{$=$} s,a_t \text{$=$} a\right]
= \EE_{\tau} \left[ c(s_t,a_t) + \sum_{k=0}^{\infty} C_{t+k+2}\given s_t \text{$=$} s,a_t \text{$=$} a\right] \\
&= \sum_{s' \in \mathcal{S}}p(s'\given s,a)\left[ c(s,a) + \EE_{ \pi} \left[\sum_{k=0}^{\infty} C_{t+k+2}\given s_{t+1} \text{$=$} s'\right]\right]
=\sum_{s' \in \mathcal{S}}p(s'\given s,a)\left[ c(s,a) + v^{\tau}(s')\right]
\end{align*}

\section{Proof of Theorem~\ref{thm:off-policy-pg}} \label{app:off-policy-pg}
By definition, we have that:
\begin{equation} \label{eq:nabla_J}
    \frac{\partial J}{ \partial \theta} = 
    \sum_{s \in \Scal} d_{\pi_{\HH}}(s) \frac{\partial v^{\tau}(s)}{\partial \theta}
\end{equation}
Now, to compute the gradient of the value function $v^{\tau}(s)$, we apply the chain rule:
\begin{equation} \label{eq:nabla_v}
\frac{\partial v^{\tau}(s)}{\partial \theta} = 
 \tau(s) \sum_{a \in \mathcal{A}} \frac{\partial \pi_{\MM,\theta}}{\partial \theta} q^{\tau}(s,a)
 + \sum_{a \in \mathcal{A}} \left( \tau(s)\pi_{\MM,\theta}(a\given s) 
+ (1 - \tau(s))\pi_{\HH}(a\given s) \right)
\sum_{s' \in \mathcal{S}}p(s'\given s,a)\frac{\partial v^{\tau}(s')}{ \partial \theta}
\end{equation}
Next, let $\db_{\pi_{\HH}} = [d_{\pi_{\HH}}(s)]_{s \in \Scal}$,
$\dot {\boldsymbol{v}}^{\tau} = \left[\frac{\partial v^{\tau}(s)}{\partial \theta_{s'}}\right]_{s, s' \in \Scal}$,
$\Gb = [G(s,s')]_{s, s' \in \Scal}$ with
\begin{equation*}
    G(s,s') = \sum_{a \in \mathcal{A}} \frac{\partial \pi_{\MM,\theta}(a\given s)}{\partial \theta_{s'}}q^{\tau}(s,a),
\end{equation*}
$\Pb = [P(s,s')]_{s, s' \in \Scal}$ with
\begin{equation*}
P(s,s') = \sum_{a \in \Acal}\left( \tau(s)\pi_{\MM,\theta}(a\given s) 
+ (1 - \tau(s))\pi_{\HH}(a\given s) \right)p(s' \given s, a),
\end{equation*}
and $\Db= \diag(\tau)$. 
Then, we can rewrite Eq.~\ref{eq:nabla_v} as:
\begin{equation*} 
\dot{\boldsymbol{v}}^{\tau} = \Db \Gb + 
\boldsymbol P \dot{\boldsymbol{v}}^{\tau} \Rightarrow \dot{\boldsymbol{v}}^{\tau}{=}  (\boldsymbol I - \boldsymbol P )^{-1} \Db \Gb
\end{equation*}
Therefore, we have that:
\begin{equation*} \label{eq:nabla_j-proof}
\frac{\partial J}{ \partial \theta} = \db_{\pi_{\HH}}^T \dot{\boldsymbol{v}}^{\tau} = \db_{\pi_{\HH}}^T (I - \Pb)^{-1}\Db \Gb = \boldsymbol m^T \Gb = \sum_{s \in \Scal} m(s) \sum_{a \in \Acal} \frac{\partial \pi_{\MM,\theta}}{\partial \theta} q^{\tau}(s,a)
\end{equation*}

\section{Proof of Proposition~\ref{prop:emphasis}} \label{app:Mt-update}
Let $\bar{m}(s) := d_{\pi_{\HH}}(s)\lim_{t \rightarrow \infty} \EE[M_t\given  {s_t=s}]$. Then, 
we have that:
\begin{align*}
\bar{m}(s) &= d_{\pi_{\HH}}(s)\lim_{t \rightarrow \infty} \EE[d(s_t) +  \varrho_{t-1} M_{t-1} \given  {s_t = s}] \\
     &\ost{(i)}{=} d_{\pi_{\HH}}(s)\tau(s) + d_{\pi_{\HH}}(s)\lim_{t \rightarrow \infty} \sum_{s', a'}\PP\{{s_{t-1} = s'}, {a_{t-1}=a'} \given  {s_t=s}\}\frac{\varpi(a'\given s')}{\pi_{\HH}(a'\given s')} \EE[M_{t-1} \given  {s_{t-1}=s'}]\\
     &\ost{(ii)}{=} d_{\pi_{\HH}}(s)\tau(s) + d_{\pi_{\HH}}(s)\sum_{s', a'}\frac{d_{\pi_{\HH}}(s')\pi_{\HH}(a'\given  s')p(s\given  s', a')}{d_{\pi_{\HH}}(s)} \frac{\varpi(a'\given s')}{\pi_{\HH}(a'\given s')}\lim_{t \rightarrow \infty}\EE[M_{t-1} \given  {s_{t-1}=s'}] \\
     &= d_{\pi_{\HH}}(s)\tau(s) + \sum_{s'} \sum_{a'}\varpi(a'\given s')p(s\given  s', a')d_{\pi_{\HH}}(s')\lim_{t \rightarrow \infty}\EE[M_{t-1} \given  {s_{t-1}=s'}]\\
     &= d_{\pi_{\HH}}(s)\tau(s) +\sum_{s'} P(s',s) \bar{m}(s'),
\end{align*}
where, in $(i)$, we have used that $\tau(s)=\lim_{t \rightarrow \infty} \EE[d(s_t) \given  s_t=s]$ since $\tau(s)$ is fixed and, 
in $(ii)$, we have used the Bayes rule.
Now, let $\bar{\mb} = [\bar{m}(s)]_{s \in \Scal}$. Then, we can rewrite the above expression as:
\begin{equation*}
\bar{\mb}= \Db \db_{\pi_{\HH}} + \Pb^T \bar{\mb} \quad \Rightarrow \quad 
\bar{\mb} = (I - \Pb^T)^{-1}\Db \, \db_{\pi_{\HH}}
\end{equation*}
Therefore, by definition, it readily follows that $\bar{\mb} = [m(s)]_{s \in \Scal}$.

\section{Proof of Proposition~\ref{prop:gradient-estimator}} \label{app:gradient-estimator}
It readily follows that:
\begin{align*}
 \EE[M_t\rho_t \delta_t \nabla_{\theta}\ln \pi_{\MM,\theta}] &= \sum_{s} d_{\pi_{\HH}}(s) \EE [M_t \given  s_t {=} s] \EE[\rho_t \delta_t \nabla_{\theta}\ln \pi_{\MM,\theta} \given  s_t{=}s] \\
 &= \sum_{s} m(s) \sum_{a} \pi_{\HH}(a\given s) \frac{\pi_{\MM, \theta}(a\given s)}{\pi_{\HH}(a\given s)}\frac{1}{\pi_{\MM,\theta}(a\given s)} \\
 & \qquad \qquad \qquad \qquad \times \frac{\partial \pi_{\MM,\theta}(a\given s)}{\partial \theta} \left( c(s,a) + \sum_{s'}p(s'\given s,a)v^{\tau}(s') - v^{\tau}(s) \right)\\
 &\ost{(i)}{=}  
  \sum_{s} m(s) \sum_{a} \frac{\partial \pi_{\MM,\theta}(a\given s)}{\partial \theta}
  \left( \sum_{s'}p(s'\given s,a)(c(s,a) + v^{\tau}(s')) - v^{\tau}(s) \right)\\
  &\ost{(ii)}{=}
  \sum_{s} m(s) \sum_{a} \frac{\partial \pi_{\MM,\theta}(a\given s)}{\partial \theta}(q^{\tau}(s,a) - v^{\tau}(s)) \\
  & =
  \sum_{s} m(s)  \left(\sum_{a} \frac{\partial \pi_{\MM,\theta}(a\given s)}{\partial \theta}q^{\tau}(s,a)  - v^{\tau}(s) \frac{\partial \sum_{a} \pi_{\MM,\theta}(a,s)}{\partial \theta} \right) \\
  &=
  \sum_{s} m(s)  \left(\sum_{a} \frac{\partial \pi_{\MM,\theta}(a\given s)}{\partial \theta}q^{\tau}(s,a)  - v^{\tau}(s) \frac{\partial 1}{\partial \theta} \right) \\
  &= \sum_{s} m(s)\sum_{a} \frac{\partial \pi_{\MM,\theta}(a\given s)}{\partial \theta}q^{\tau}(s,a),
 \end{align*}
where, in $(i)$, we have used $1\cdot c(s,a) = \sum_{s'}p(s'\given s,a)c(s,a)$ and, in (ii), we have used Eq.~\ref{eq:q}.

\section{Proof of Theorem~\ref{thm:off-policy-pg-epsilon-greedy}} \label{app:non-fixed-switching}
We start by explicitly writing $\tau(s) = \tau_{\theta}(s)$ and $Q^{\tau}(s, d(s)) = Q^{\tau}_{\theta}(s, d(s))$ to highlight the dependence with respect to the machine parameters $\theta$.
Moreover, under an $\epsilon$-greedy triage policy, we have that:
\begin{equation}
    \tau_{\theta}(s) = 
    \begin{cases}
    1 - \frac{\epsilon}{2} & \text{if } Q^{\tau}_{\theta}(s, 1) \leq Q^{\tau}_{\theta}(s, 0) \\
    \frac{\epsilon}{2} & \text{otherwise}
    \end{cases}
\end{equation}
Then, it follows that:
\begin{align*}
\frac{\partial v^{\tau}}{\partial \theta} &= \frac{\partial (\tau_{\theta}(s)  c_c(1) + (1-\tau_{\theta}(s) )c_c(0))}{\partial \theta} + \sum_{a \in \Acal}\frac{\partial (\tau_{\theta}(s)  \pi_{\MM,\theta}(a\given s) + (1-\tau_{\theta}(s) )\pi_{\HH}(a\given s)}{\partial \theta}q^{\tau}(s,a)   \\
&\qquad \qquad \qquad+\sum_{a \in \Acal}\left(\tau_{\theta}(s) \pi_{\MM,\theta}(a\given s) + (1-\tau_{\theta}(s) )\pi_{\HH}(a\given s)\right) \frac{\partial q^{\tau}(s,a)}{\partial \theta} \\
&= \frac{\partial \tau_{\theta}(s) }{\partial \theta}(c_c(1) - c_c(0)) + \sum_{a \in \Acal} \frac{\partial \tau_{\theta}(s)}{\partial \theta}(\pi_{\MM,\theta}(a\given s) - \pi_{\HH}(a\given s))q^{\tau}(s,a) \\
&\qquad \qquad \qquad+ \sum_{a \in \Acal} \tau_{\theta}(s) \frac{\partial \pi_{\MM,\theta}}{\partial \theta} q^{\tau}(s,a) + \sum_{a \in \Acal}\left(\tau_{\theta}(s) \pi_{\MM,\theta}(a\given s) + (1-\tau_{\theta}(s) )\pi_{\HH}(a\given s)\right)\frac{\partial q^{\tau}(s,a)}{\partial \theta} \\
&\ost{(i)}{=}
\frac{\partial \tau_{\theta}(s) }{\partial \theta}\left( Q^{\tau}_\theta(s, 1) - Q^{\tau}_\theta(s, 0) \right) + \sum_{a \in \Acal} \tau_{\theta}(s) \frac{\partial \pi_{\MM,\theta}}{\partial \theta} q^{\tau}(s,a)\\
&\qquad \qquad \qquad+\sum_{a \in \Acal}\left(\tau_{\theta}(s) \pi_{\MM,\theta}(a\given s) + (1-\tau_{\theta}(s) )\pi_{\HH}(a\given s)\right)\frac{\partial q^{\tau}(s,a)}{\partial \theta} 
\\ 
&\ost{(ii)}{=}
\sum_{a \in \Acal} \tau_{\theta}(s) \frac{\partial \pi_{\MM,\theta}}{\partial \theta} q^{\tau}(s,a)+\sum_{a \in \Acal}\left(\tau_{\theta}(s) \pi_{\MM,\theta}(a\given s) + (1-\tau_{\theta}(s) )\pi_{\HH}(a\given s)\right)\frac{\partial q^{\tau}(s,a)}{\partial \theta}
\\
&=
\sum_{a \in \Acal} \tau_{\theta}(s) \frac{\partial \pi_{\MM,\theta}}{\partial \theta} q^{\tau}(s,a)+\sum_{a \in \Acal}\left(\tau_{\theta}(s) \pi_{\MM,\theta}(a\given s) + (1-\tau_{\theta}(s) )\pi_{\HH}(a\given s)\right)\sum_{s' \in \mathcal{S}} p(s'\given s,a)\frac{\partial v^{\tau}(s')}{\partial \theta},
\end{align*}
where, in (i), we have used Eq.~\ref{eq:option-q} and, in (ii), we have used Lemma~\ref{lem:switching-gradient} below. Since the above 
expression matches Eq.~\ref{eq:nabla_v}, this concludes the proof.

\begin{lemma} \label{lem:switching-gradient}
Let $\tau(s) = \tau_{\theta}(s) $ and $Q^{\tau}(s, d(s)) = Q^{\tau}_{\theta}(s, d(s))$. For any $s \in \Scal$ and $\theta$, it 
holds that:
\begin{equation} \label{eq:switching-gradient}
    \frac{\partial \tau_{\theta}(s)}{\partial \theta}\left( Q^{\tau}_\theta(s, 1) - Q^{\tau}_\theta(s, 0) \right) = 0
\end{equation}
\end{lemma}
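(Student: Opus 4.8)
The plan is to argue by cases on where $\theta$ lies relative to the decision boundary of the $\epsilon$-greedy rule. The key observation is that $\tau_\theta(s)$ is a step function of the sign of $Q^\tau_\theta(s,1) - Q^\tau_\theta(s,0)$: it equals $1 - \epsilon/2$ on the region where $Q^\tau_\theta(s,1) \le Q^\tau_\theta(s,0)$ and $\epsilon/2$ on its complement. So for each fixed state $s$, the function $\theta \mapsto \tau_\theta(s)$ is piecewise constant.

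First I would treat the generic case: if $\theta$ is such that $Q^\tau_\theta(s,1) \ne Q^\tau_\theta(s,0)$ strictly, then $\theta$ lies in the interior of one of the two constant pieces, hence $\partial \tau_\theta(s)/\partial\theta = 0$, and the product in Eq.~\ref{eq:switching-gradient} vanishes trivially. Second, I would handle the boundary case $Q^\tau_\theta(s,1) = Q^\tau_\theta(s,0)$: here the second factor $Q^\tau_\theta(s,1) - Q^\tau_\theta(s,0)$ is exactly zero, so the product again vanishes regardless of whether $\partial\tau_\theta(s)/\partial\theta$ exists or how it is interpreted (e.g.\ as a one-sided derivative). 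Together these two cases cover all $\theta$, giving the claim.

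The main obstacle is the boundary case, where $\tau_\theta(s)$ is genuinely non-differentiable in the classical sense, so one has to be slightly careful about what "$\partial\tau_\theta(s)/\partial\theta$" means. The clean resolution is precisely that the other factor annihilates any such ill-behaved term: as long as $\partial\tau_\theta(s)/\partial\theta$ is interpreted as some finite quantity (a directional/one-sided derivative, or a subgradient element) on the measure-zero boundary set, multiplying by the vanishing difference $Q^\tau_\theta(s,1) - Q^\tau_\theta(s,0) = 0$ yields $0$. I would state this explicitly so the reader sees that the lemma does not require differentiability of $\tau_\theta$ but only that the product is well defined, which it is. This is why, in the proof of Theorem~\ref{thm:off-policy-pg-epsilon-greedy}, the troublesome terms arising from differentiating through $\tau_\theta$ can be dropped and the gradient reduces to the same expression as in the fixed-$\tau$ case.
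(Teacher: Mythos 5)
Your case split (off the decision boundary vs.\ on it) is exactly the structure of the paper's proof, and your first case is handled the same way: away from $\theta_0$ with $Q^{\tau}_{\theta}(s,1)\neq Q^{\tau}_{\theta}(s,0)$, continuity of $Q^{\tau}_{\theta}$ in $\theta$ puts $\theta$ in the interior of a region where $\tau_{\theta}(s)$ is constant, so $\partial\tau_{\theta}(s)/\partial\theta=0$ and the product vanishes.

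The boundary case, however, is argued incorrectly, and that is the only case in which the lemma has any content. You claim the product vanishes ``as long as $\partial\tau_{\theta}(s)/\partial\theta$ is interpreted as some finite quantity (a directional/one-sided derivative, or a subgradient element).'' But at a crossing point $\theta_0$ the $\epsilon$-greedy policy $\tau_{\theta}(s)$ \emph{jumps} from $1-\epsilon/2$ to $\epsilon/2$: its one-sided difference quotients behave like $(\epsilon-1)/(\theta-\theta_0)$ and diverge, and its distributional derivative is a Dirac mass, not a finite number. So the expression is formally of the type $\infty\cdot 0$, and ``finite times zero equals zero'' does not apply. The paper's resolution is to evaluate the one-sided limits of
\begin{equation*}
  \frac{\tau_{\theta}(s)-\tau_{\theta_0}(s)}{\theta-\theta_0}\,\bigl(Q^{\tau}_{\theta_0}(s,1)-Q^{\tau}_{\theta_0}(s,0)\bigr),
\end{equation*}
where the second factor is \emph{exactly} zero at $\theta_0$ by the definition of the boundary; hence the numerator $(\epsilon-1)\bigl(Q^{\tau}_{\theta_0}(s,1)-Q^{\tau}_{\theta_0}(s,0)\bigr)$ is identically zero for every $\theta\neq\theta_0$, and each term of the sequence is $0/(\theta-\theta_0)=0$ before any limit is taken. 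In other words, the annihilation must happen inside the difference quotient, not as a product of a limit that does not exist with a zero. Your intuition that ``the other factor annihilates the ill-behaved term'' is the right one, but to make it a proof you need to exhibit the product as this limit of identically zero quantities (or an equivalent argument), rather than appeal to a finite derivative that the step function $\tau_{\theta}(s)$ simply does not possess at the jump.
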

\begin{proof}
Let $\theta_0$ be the point where $Q^{\tau}_{\theta_0}(s, 1) = Q^{\tau}_{\theta_0}(s, 0)$. For $\theta \neq \theta_0$, Eq.~\ref{eq:switching-gradient} holds since $\nabla_{\theta} \tau_{\theta}(s)  = 0$. 
Now, assume that $Q^{\tau}_{\theta}(s,d(s))$ is continuous w.r.t. $\theta$ and that w.l.o.g that 
$Q^{\tau}_{\theta}(s, 1) > Q^{\tau}_{\theta_0}(s, 0)$, for $\theta = \theta_0 + \varepsilon$ and $Q^{\tau}_{\theta}(s, 1) < Q^{\tau}_{\theta_0}(s, 0)$, 
for $\theta = \theta_0 - \varepsilon$, for small $\varepsilon > 0$. Then we have that:
\begin{align*}
\lim_{\theta \rightarrow \theta_0^+} \frac{\tau_{\theta}(s)  - \tau_{\theta_0}(s) }{\theta - \theta_0} \left(Q^{\tau}_{\theta_0}(s, 1) - Q^{\tau}_{\theta_0}(s, 0)\right) & =  \lim_{\theta \rightarrow \theta_0^+} \frac{ \frac{\epsilon}{2} - (1 - \frac{\epsilon}{2})}{\theta - \theta_0} \left(Q^{\tau}_{\theta_0}(s, 1) - Q^{\tau}_{\theta_0}(s, 0)\right)\\
&= \lim_{\theta \rightarrow \theta_0^+}  \frac{ (\epsilon - 1)\left(Q^{\tau}_{\theta_0}(s, 1) - Q^{\tau}_{\theta_0}(s, 0)\right)}{\theta - \theta_0} \\ 
&\ost{(i)}{=} \lim_{\theta \rightarrow \theta_0^+}  \frac{(\epsilon - 1)\left(\nabla_{\theta} Q^{\tau}_{\theta_0}(s, 1) - \nabla_{\theta}Q^{\tau}_{\theta_0}(s, 0)\right)}{1}\\
&=\lim_{\theta \rightarrow \theta_0^+}\frac{0}{1} = 0,
\end{align*}
where, in $(i)$, we have used L'H\^opital's rule. Moreover, we also have that:
\begin{align*}
\lim_{\theta \rightarrow \theta_0^-} \frac{\tau_{\theta}(s)  - \tau_{\theta_0}(s) }{\theta - \theta_0} \left(Q^{\tau}_{\theta_0}(s, 1) - Q^{\tau}_{\theta_0}(s, 0)\right) & =  \lim_{\theta \rightarrow \theta_0^-} \frac{ (1-\frac{\epsilon}{2}) - (1 - \frac{\epsilon}{2})}{\theta - \theta_0} \left(Q^{\tau}_{\theta_0}(s, 1) - Q^{\tau}_{\theta_0}(s, 0)\right)\\
&= \lim_{\theta \rightarrow \theta_0^-}  \frac{ ((1-\frac{\epsilon}{2}) - (1 - \frac{\epsilon}{2}))\left(Q^{\tau}_{\theta_0}(s, 1) - Q^{\tau}_{\theta_0}(s, 0)\right)}{\theta - \theta_0} \\ 
&\ost{(i)}{=} \lim_{\theta \rightarrow \theta_0^-}  \frac{0\cdot\left(\nabla_{\theta} Q^{\tau}_{\theta_0}(s, 1) - \nabla_{\theta}Q^{\tau}_{\theta_0}(s, 0)\right)}{1}\\
&=\lim_{\theta \rightarrow \theta_0^-}\frac{0}{1} = 0,
\end{align*}
where, in $(i)$, we have also used L'H\^opital's rule. This concludes the proof.
\end{proof}

\section{Derivation of Matrix Equality} \label{app:matrix-A}
To show that $\Ab = \boldsymbol \Phi^T \boldsymbol F (I - \boldsymbol P) \boldsymbol \Phi$, we proceed as follows:
\begin{align*}
 \Ab &= \lim_{t \rightarrow \infty} \EE[\boldsymbol A_t]
 = \sum_{s} d_{\pi_{\HH}}(s) \EE[F_t \varrho_t \boldsymbol\phi(s,d_t)(\boldsymbol\phi(s,d_t) - \boldsymbol\phi(s,d_{t+1}))^T  \given  s_t {=}s] \\
 &= \sum_{s} d_{\pi_{\HH}}(s) \EE[F_t \given  s_t {=}s] \EE[\varrho_t \boldsymbol\phi(s,d_t)(\boldsymbol\phi(s,d_t) - \boldsymbol\phi(s,d_{t+1}))^T ] \\
 &= \sum_{s} f(s) \sum_{a}\pi_{\HH}(a\given s)\varrho(a,s)\EE_{d \sim  \tau(s)}[\boldsymbol\phi(s,d)](\EE_{d\sim \tau(s)}[\boldsymbol\phi(s,d)] - \sum_{s'}p(s'\given s,a)\EE_{d \sim \tau(s')}[\boldsymbol\phi(s',d)])^T \\
 &= \sum_{s} f(s) \sum_{a}\pi_{\HH}(a\given s) \frac{\varpi(a\given s)}{\pi_{\HH}(a\given s)}\EE_{d \sim \tau(s)}[\boldsymbol\phi(s,d)](\EE_{d\sim \tau(s)}[\boldsymbol\phi(s,d)] - \sum_{s'}p(s'\given s,a)\EE_{d \sim \tau(s')}[\boldsymbol\phi(s',d)])^T \\
 &= \sum_{s} f(s)\EE_{d\sim \tau(s)}[\boldsymbol\phi(s,d)](\EE_{d \sim  \tau(s)}[\boldsymbol\phi(s,d)] - \sum_{a}\varpi(a\given s)\sum_{s'}p(s'\given s,a)\EE_{d \sim \tau(s')}[\boldsymbol\phi(s',d)])^T \\
 &= \boldsymbol \Phi^T \boldsymbol F (I - \boldsymbol P) \boldsymbol \Phi.
 \end{align*}
  
\section{Proof of Claim~\ref{claim}}
We have that $\boldsymbol K$ is positive definite if $\boldsymbol S = \boldsymbol K + \boldsymbol K^T$ is  positive definite. 
Moreover, if $\boldsymbol S$ is strictly diagonally dominant~\footnote{\label{diagonally-dominant}The matrix $\boldsymbol A$ is strictly diagonally dominant if $|a_{ii}| > \sum_{j,j \neq i} |a_{ij}|$ for every  i}, then it is also positive definite. 
Given that the diagonal elements of $\boldsymbol K$ are positive and the off diagonal negative --see (i) below--, 
$\boldsymbol S$ is strictly diagonally dominant if each row sum plus the corresponding column sum of $\boldsymbol K$ 
is positive. 
This is because $S(i,j) = K(i,j) + K(j,i)$ if $i \neq j$ and $S(i,i) = 2K(i,i)$ so it must hold that: 
\begin{align*}
    |S(i,i)| > \sum_{i\neq j}|S(i,j)| 
    & \Rightarrow |2K(i,i)| > \sum_{i\neq j}|K(i,j) + K(j,i)| \\
    & \Rightarrow |2K(i,i)| - \sum_{i\neq j}|K(i,j) + K(j,i)| > 0 \\ 
    & \ost{(i)}{\Rightarrow} 2K(i,i) + \sum_{i\neq j} (K(i,j) + K(j,i)) > 0  \Rightarrow  \sum_{i,j}K(i,j) + \sum_{i,j}K(j,i) > 0
\end{align*}
This concludes the proof.

\section{Experiments: Additional Details} 
Here we provide additional details about the environment design and implementation of the methods.

\subsection{Environment details} 
The sampling of each cell type during the environment generation depends on the traffic level of the corresponding row, following the distribution in Table~\ref{tab:type_distribution}. The traffic level of each row depends on the traffic level of the previous row according to the transition diagram in Figure~\ref{fig:transition-diagram}. Each episode starts at a random row with traffic level \texttt{light}.
\begin{figure}[h]
\begin{floatrow}
\capbtabbox{%
\begin{tabular}{@{}lllll@{}}
\toprule
       & road & grass & stone & car \\ \midrule
no-car & 0.7  & 0.2   & 0.1   & 0   \\
low    & 0.6  & 0.2   & 0.1   & 0.1 \\
heavy  & 0.5  & 0.2   & 0.1   & 0.2 \\ \bottomrule
\end{tabular}
}{%
  \caption{Cell type distribution }%
  \label{tab:type_distribution}
}
\ffigbox{%
\includegraphics[width=.8\linewidth]{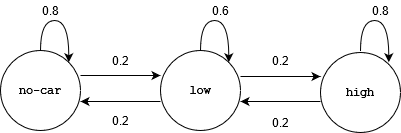}
}{%
  \caption{Traffic level transition diagram}%
  \label{fig:transition-diagram}
}
\end{floatrow}
\end{figure}

\subsection{Implementation details }
We describe below the network architectures as well as the hyperparameters and some implementation specific choices we made in our experiments. For the development of the code we used Python 3.7 and PyTorch 1.8.1. In all experiments we used fixed random seeds for reproducibility. We note also that even though all algorithms ran once for each scenario-- for time and computational reasons--, we observed consistency of the results across different scenarios. 

\xhdr{Networks}%
The machine neural policy has the following representation: (a) the input is $76$ binary features corresponding to the perceived state, (b) the output is the estimated log probability of each action, and (c) there is $1$ hidden layer with $256$ units and hyperbolic tangent activation. The option value function network has the following representation: (a) the input is $78$ binary features ($76$ for the state and additional $2$-bits encoding the agent in control), (b) the output is the estimated option value, and (c) there is $1$ hidden layer same as for the machine neural policy. 

\xhdr{Hyperparameters}%
To finalize the number of offline and online training episodes, we experimented with various combinations. For the offline stage we experimented with values in $[50,100]$ with the incentive to achieve minimum cost without overfitting in all methods for all scenarios, for comparison reasons. For the online stage we chose per scenario from values in $\{50, 100, 200\}$ with the incentive to reach convergence, while saving computational time in all methods. We tried also different schedules for $\epsilon$ and chose the one resulting in optimal performance. In the offline stage, except the schedule applied in final experiments, we experimented also with: a) a fixed $\epsilon \in \{0.1, 0.2, 0.3\}$, b) $\epsilon=0.3$ in the first half and $\epsilon=0.2$ in the second half. In the online stage we assumed only decaying $\epsilon$ while trying different decay rates $1/t, 1/\sqrt{t}$. We used the selected schedule for $\epsilon$ in all scenarios in all methods.
In all experiments we used RMSProp optimizer as used in the implementation\footnote{\label{ft:reference-code}\url{https://github.com/ShangtongZhang/DeepRL}} of ACE \cite{imani2019offpolicy} and COF-PAC \cite{pmlr-v119-zhang20s} and  empirically adjusted the initial learning rate to $10^{-4}$ after trying values in $\{10^{-3},10^{-4},10^{-5}\}$; our choice was based on stability and training efficiency. 
For simplicity of computation we used batch of size 1.

\xhdr{Implementation choices} In order to encourage exploration in the online stage we applied entropy regularization in the actor update with an initial weight of $0.01$, decaying with rate $1/t$. Moreover, for this update 
in practice we used $c(s_t, a_t) + v_{\vartheta_t}^{\tau}(s_{t+1}) - v_{\vartheta_t}^{\tau}(s_{t})$ to approximate the true $v^{\tau}(s_t)$ that is required in Eq. \ref{eq:gradient-update-on-policy}, as this increased the stability of our algorithm. To this end, in all experiments, we also used a separate target network -- a frozen copy of the option value function network, updated every $5000$ steps-- for the computation of $Q^{\tau}_{\vartheta_t}(s_{t+1}, d_{t+1})$ as in the implementation\textsuperscript{\ref{ft:reference-code}}.

	\label{sec:appendix}
\end{document}